\newtcolorbox{highlighted}{colback=yellow,breakable}
\newtheorem{definition}{Definition}
\newtheorem{thm}{Theorem}
\definecolor{tamercolor}{rgb}{0.5, 0.0, 0.13}
\definecolor{forest}{rgb}{0.25, 0.74, 0.0}
\newcommand{\NAMEA}{NaiveRL}
\newcommand{\NAMEB}{CHARME}
\begin{document}

\title{CHARME: A Chain-Based Reinforcement Learning Approach for the Minor Embedding Problem}

\author{Hoang M. Ngo}
\authornote{Equal contribution.}
\orcid{1234-5678-9012}
\affiliation{%
  \institution{Department of Computer and Information Science and Engineering, University of Florida}
  \city{Gainesville}
  \state{Florida}
  \country{USA}
}

\author{Nguyen H K. Do}
\authornotemark[1]
\orcid{1234-5678-9012}
\affiliation{%
  \institution{Department of Computer and Information Science and Engineering, University of Florida}
  \city{Gainesville}
  \state{Florida}
  \country{USA}
}
\author{Minh N. Vu}
\authornote{Most of the work for this submission was completed while he was at the University of Florida.}
\orcid{1234-5678-9012}
\affiliation{%
  \institution{Department of Computer and Information Science and Engineering, University of Florida}
  \city{Gainesville}
  \state{Florida}
  \country{USA}
}
\affiliation{%
  \institution{Theoritical Division, Los Alamos National Laboratory}
  \city{Los Alamos}
  \state{New Mexico}
  \country{USA}
}
\author{Tre' R. Jeter}
\orcid{1234-5678-9012}
\affiliation{%
  \institution{Department of Computer and Information Science and Engineering, University of Florida}
  \city{Gainesville}
  \state{Florida}
  \country{USA}
}

\author{Tamer Kahveci}
\orcid{1234-5678-9012}
\affiliation{%
  \institution{Department of Computer and Information Science and Engineering, University of Florida}
  \city{Gainesville}
  \state{Florida}
  \country{USA}
}
\author{My T. Thai}
\authornote{Corresponding author.}
\orcid{1234-5678-9012}
\affiliation{%
  \institution{Department of Computer and Information Science and Engineering, University of Florida}
  \city{Gainesville}
  \state{Florida}
  \country{USA}
}

\renewcommand{\shortauthors}{Hoang Ngo et al.}

\begin{abstract}
  Quantum annealing (QA) has great potential to solve combinatorial optimization problems efficiently. However, the effectiveness of QA algorithms is heavily based on the embedding of problem instances, represented as logical graphs, into the quantum processing unit (QPU) whose topology is in the form of a limited connectivity graph, known as the minor embedding problem. Because the minor embedding problem is an NP-hard problem~\mbox{\cite{Goodrich2018}}, existing methods for the minor embedding problem suffer from scalability issues when faced with larger problem sizes. In this paper, we propose a novel approach utilizing Reinforcement Learning (RL) techniques to address the minor embedding problem, named \NAMEB. \NAMEB\  includes three key components: a Graph Neural Network (GNN) architecture for policy modeling, a state transition algorithm that ensures solution validity, and an order exploration strategy for effective training. Through comprehensive experiments on synthetic and real-world instances, we demonstrate the efficiency of our proposed order exploration strategy as well as our proposed RL framework, \mbox{\NAMEB}. In particular, \mbox{\NAMEB } yields superior solutions in terms of qubit usage compared to fast embedding methods such as Minorminer and ATOM. Moreover, our method surpasses the OCT-based approach, known for its slower runtime but high-quality solutions, in several cases. In addition, our proposed exploration enhances the efficiency of the training of the \NAMEB\ framework by providing better solutions compared to the greedy strategy.
\end{abstract}

\begin{CCSXML}
<ccs2012>
   <concept>
       <concept_id>10010583.10010786.10010813.10011726</concept_id>
       <concept_desc>Hardware~Quantum computation</concept_desc>
       <concept_significance>500</concept_significance>
       </concept>
   <concept>
       <concept_id>10010520.10010521.10010542.10010550</concept_id>
       <concept_desc>Computer systems organization~Quantum computing</concept_desc>
       <concept_significance>300</concept_significance>
       </concept>
   <concept>
       <concept_id>10010147.10010257.10010258.10010261</concept_id>
       <concept_desc>Computing methodologies~Reinforcement learning</concept_desc>
       <concept_significance>500</concept_significance>
       </concept>
 </ccs2012>
\end{CCSXML}

\ccsdesc[500]{Hardware~Quantum computation}
\ccsdesc[300]{Computer systems organization~Quantum computing}
\ccsdesc[500]{Computing methodologies~Reinforcement learning}

\keywords{}

\maketitle

\section{Introduction}
\label{sec:introduction}
Quantum annealing (QA) is a quantum-based computational approach that leverages quantum phenomena such as entanglement and superposition to tackle complex optimization problems, in various domains such as machine learning (ML) \cite{Caldeira2019},\cite{Mott2017}, \cite{Pudenz2013},  bioinformatics \cite{Dillion2022}, \cite{Mulligan2019}, \cite{ngo2023qutie}, and networking \cite{Guillaume2022}, \cite{Ishizaki2019}. QA tackles optimization problems through a three-step process. First, the problem is encoded using a \emph{logical graph} that represents the quadratic unconstrained binary optimization (QUBO) formulation of a given problem. Next, the logical graph is embedded into a Quantum Processing Unit (QPU), which has its own graph representation known as a \emph{hardware graph}. The embedding aims to ensure that the logical graph can be obtained by contracting edges in the embedded subgraph of the hardware graph. This process is commonly referred to as \emph{minor embedding}. Finally, the QA process is iteratively executed on the embedded QPU to find the optimal solution which results in the minimum value for the given QUBO.

The minor embedding process is a major bottleneck that prevents QA from scaling. In particular, the topology of the hardware graph or its induced subgraphs may not perfectly align with that of the logical graph. Therefore, minor embedding typically requires a significant number of additional qubits and their connections to represent the logical graph on the hardware graph. Consequently, this situation can lead to two primary issues that can degrade the efficiency of QA. First, the need for a significant number of additional qubits to embed the logical graph can strain the available resources of the hardware graph, leading to scalability issues in terms of limiting hardware resources. Furthermore, when the size of the logical graph increases, the running time required to identify a feasible embedding increases exponentially. As a result, QA with a prolonged embedding time can be inefficient in practical optimization problems that require rapid decision-making.

There are two main approaches to address the above challenges in minor embedding, namely top-down and bottom-up approaches. The top-down approach aims to find embeddings of complete graphs \cite{Boothby2016}, \cite{Choi2011}, \cite{Klymko2012} in the hardware graph. Although embedding a complete graph can work as a solution for any incomplete graph with the same or smaller size, the embedding process for incomplete graphs, especially sparse graphs, usually requires much fewer qubits compared to embedding a complete graph. Specifically, experiments in\mbox{~\cite{Goodrich2018}} show that embedding techniques for sparse graphs can reduce the qubit usage by up to three times compared to the number of qubits required to embed complete graphs. As a result, the top-down approach may not be the most effective strategy for embedding sparse logical graphs. Although post-process techniques have been considered to mitigate this problem \cite{Goodrich2018}, \cite{Serra2022}, handling sparse graphs is a huge hindrance to the top-down approach.

On the other hand, the bottom-up approach directly constructs solutions based on the topology of the logical and the hardware graphs. This approach involves computing a minor embedding through Integer Programming (IP) which finds an exact solution from predefined constraints \cite{Bernal2020}, or progressive heuristic methods which gradually map individual nodes of the logical graph to the hardware graph \cite{Cai2014}, \cite{Ngo2023}, \cite{Pinilla2019}. Compared to top-down approaches, bottom-up methods are not constrained by the predefined embedding of complete graphs, allowing for more flexible embedding constructions. However, bottom-up approaches need to consider a set of complicated conditions associated with the minor embedding problem. As a result, the computational cost required to obtain a feasible solution is substantial.

To overcome the above shortcomings, we introduce a novel learning approach based on Reinforcement Learning (RL). Leveraging its learning capabilities, RL can rapidly generate solutions for previously unseen problem instances using its well-trained policy, thus efficiently enhancing the runtime complexity. Furthermore, through interactions with the environment and receiving rewards or penalties based on the quality of solutions obtained from the environment, RL agents can autonomously refine their strategies over time. This enables them to explore state-of-the-art solutions, which are particularly beneficial for emerging and evolving challenges such as minor embedding. Thus, RL is commonly applied to enhance the design of quantum systems~\cite{Khairy2020,Chen2022,Wauters2020,Fan2022,Ostaszewski2021,Saravanan_2024,LeCompte2023}.

Although RL cannot guarantee optimality, it has emerged as a powerful technique for \textit{handling} optimization problems~\mbox{\cite{BENGIO2021}}. However, applying it to our minor embedding problem has exposed several challenges. The first challenge is the stringent feasibility requirement for solutions in the minor embedding problem. Specifically, feasible solutions must fulfill a set of conditions for the minor embedding problem, including \mbox{\emph{chain connection}}, where each node in the logical graph $P$ is mapped to a connected subset of nodes (a "chain") in the hardware graph $H$; \mbox{\emph{global connection}}, ensuring that if two nodes in $P$ are connected by an edge, at least one corresponding edge must exist between their mapped chains in $H$; and \mbox{\emph{one-to-many}}, which requires that chains mapped from different nodes in $P$ do not overlap in $H$. All of these conditions are formally defined in Section~\mbox{\ref{sec:preliminaries}}. These constraints lead to an overwhelming number of infeasible solutions compared to the number of feasible solutions within the search space. As a result, it is challenging for the RL framework to explore a sufficient number of feasible solutions during the training phase as well as the testing phase.

The second challenge lies in the policy design of the RL framework. In particular, the policy of the RL framework is a mapping of observed input states to a choice of actions~\cite{Singh2000}. As recent advances in deep learning in representing high-dimensional inputs, deep learning models are utilized to parameterize the policy function~\cite{Mnih2015}. The deep learning model representing the policy (a.k.a. policy model) receives the information derived from the current state as input and returns the decision for the next action. Thus, the policy model is a key factor in the performance of the RL framework. In the context of applying RL to the minor embedding problem, the input state can include the topology information (of the given logical and hardware graphs) and the embedding information (which is a one-to-many mapping from the node set of the logical graph to the node set of the hardware graph). As the size of the two graphs increases, the number of state dimensions increases quadratically, potentially leading to an increase in the number of parameters required for the policy model to handle input states. On the other hand, reducing the state dimensions might degrade the policy model's performance due to a lack of valuable features. Therefore, the policy model in the RL framework for the minor embedding problem must be capable of extracting and combining valuable features from both the topology and embedding information in order to scale well with the size of the logical and hardware graphs without compromising performance. However, designing such a policy model is challenging because the topology representations and the embedding information are not identical, complicating the data combination process.

To realize an RL-based approach for the minor embedding problem, we first introduce an initial design, named \NAMEA, which sequentially embeds each node in a given logical graph into an associated hardware graph. Based on this initial design, we further introduce \NAMEB, incorporating three additional key components, to address the aforementioned challenges. The first component is a GCN-based architecture that integrates features of the logical graph, the hardware graph, and the embedding between the two graphs. This integration allows the architecture to return action probabilities and state values, serving as a model for the policy in the RL framework. The second component is a state transition algorithm that sequentially embeds a node in the logical graph into a chain of nodes in the hardware graph in order to ensure the validity of the resulting solutions. Lastly, the third component is an order exploration strategy that navigates the RL agent towards areas in the search space with good solutions, facilitating the training process.

Our proposed solutions are extensively evaluated against three state-of-the-art methods, including an OCT-based technique, a top-down approach~\cite{Goodrich2018}, and two bottom-up approaches Minorminer~\cite{Cai2014} and ATOM~\cite{Ngo2023}. The results illustrate that \mbox{\NAMEB\ } outperforms all state-of-the-art methods in terms of qubit usage for sparse logical graphs. For other graph types, while it does not outperform the OCT-based approach—which requires significant runtime for solution refinement—it remains the most efficient fast method, providing the best solutions within a short runtime. In addition, the computational efficiency of \NAMEB\  is comparable to that of the fastest method in terms of running time.

\textbf{Organization.} The rest of the paper is structured as follows. Section~\ref{sec:preliminaries} introduces the definition of minor embedding and an overview of RL. Our method and its theoretical analysis are described in Section~\ref{sec:proposedapproach}. Section~\ref{sec:experiments} presents our experimental results. Finally, Section~\ref{sec:conclusion} concludes the paper.

\section{Preliminaries}
\label{sec:preliminaries}
This section formally defines the minor embedding problem in quantum annealing (QA) and presents the preliminaries needed for our proposed solutions. It includes a brief overview of ATOM for the minor embedding problem based on a concept of topology adaption \cite{Ngo2023}, basic concepts of reinforcement learning (RL), and Advantage Actor Critic (A2C) \cite{Mnih2016}, which will be partially adopted in our solution.

\subsection{Minor Embedding Problem}
\begin{table}[t]
    \centering
\begin{tabular}{|l|l|}
      \hline
      Notion & Definition \\
     \hline
     $\mathcal{P}(S)$ & Power set of set $S$. \\
     $P = (V_P, E_P)$ & Logical graph $P$ with the set of nodes $V_P$ and the set of edges $E_P$. \\
     $X_P^{(a)}, X_P^{(f)}$ & The adjacency matrix and the node feature matrix of $P$. \\
     $H = (V_H, E_H)$ & Hardware graph $H$ with the set of nodes $V_H$ and the set of edges $E_H$. \\
     $X_H^{(a)}, X_H^{(f)}$ & The adjacency matrix and the node feature matrix of $H$.\\
     $G[S]$ & Induced subgraph of $G$ for set of nodes $S \subseteq V(G)$.\\
     $\phi^{(t)}: V_P \rightarrow \mathcal{P}(V_H)$ & The embedding from $P$ to $H$ at the step $t$. \\
     $|\phi^{(t)}|$ & Total size (the number of qubits) of $\phi^{(t)}$, calculated by $\sum_{v \in V_P} |\phi^{(t)}(v)| $.  \\
     $U^{(t)}_P$ & The set of nodes in $P$ that are embedded to $H$ at the step $t$. \\
     $U^{(t)}_H$ & The set of nodes in $H$ that are embedded by a node in $P$ at the step $t$. \\
     $O = (\Bar{a}_1, \dots, \Bar{a}_{|S|})$ & The embedding order including $|S|$ different precomputed actions. $\Bar{a}_i \in S $. \\ 
     & $O$ can be considered as a permutation of the set $S$. \\
     $\mathcal{E}^{(S)}$ & The space of embedding orders which are permutations of the set $S$. \\
     $\mathcal{E}^{(S)}_{O_A}$ & The space of embedding orders which are permutations of the set $S$ \\
     & with the prefix $O_A$. \\
     \hline
    \end{tabular}
    \caption{Common notions used in this paper}
\label{table:terminology}
\end{table}

We first present fundamental concepts and terminologies needed to understand how QA solves optimization problems. Next, we describe the minor embedding problem in QA. The common terminologies used throughout this paper are described in Table~\ref{table:terminology}.

Given an optimization problem with a set of binary variables $\mathbf{x} = \{x_1, x_2, ... , x_n\}$ and quadratic coefficients $Q_{ij}$, the problem can be represented in a quadratic unconstrained binary optimization (QUBO) form as follows:
\begin{equation}
    f(\mathbf{x}) = \sum_{i,j=1}^{n} Q_{ij}x_ix_j
\end{equation}
From this equation, we express the optimal solution for the given problem corresponding to the state with lowest energy of final Hamiltonian as $\mathbf{x}^{\ast}$ such that for $Q_{ij} \in \mathbb{R}$:

\begin{equation}
    \mathbf{x}^{\ast} = \textrm{arg min}_{\mathbf{x} \in \{0,1\}^n} \sum_{i,j=1}^{n} Q_{ij}x_ix_j
\end{equation}
\hspace{35mm} 
 
The QUBO formulation is then encoded using a logical graph. In the logical graph, each node corresponds to a binary variable $x_i$. For any two nodes $x_i$ and $x_j$, there exists an edge $(x_i, x_j)$ if the quadratic coefficient $Q_{ij}$ is non-zero.

The hardware graph is a representation of the Quantum Processing Unit (QPU) topology with nodes corresponding to qubits and edges corresponding to qubit couplers. The D-Wave QA system consists of three QPU topologies: Chimera (earliest), Pegasus (latest), and Zephyr (next generation). Each of these topologies is in a grid form: a grid of identical sets of nodes called unit cells. In this paper, we consider the Chimera topology because the embedding methods can be translated directly to the Pegasus topology without modification, since Chimera is a subgraph of Pegasus \cite{Boothby2020}. In other words, the method we develop for Chimera also works for Pegasus.

QUBO models a given optimization problem as a logical graph and the topology of a QPU used in QA as a hardware graph. In order to solve the final Hamiltonian that QA processes, we must find a mapping from the logical graph to the hardware graph. In the following, we formally define the minor embedding problem:

\begin{figure}[t]
\centering
	\includegraphics[width=.7\linewidth]{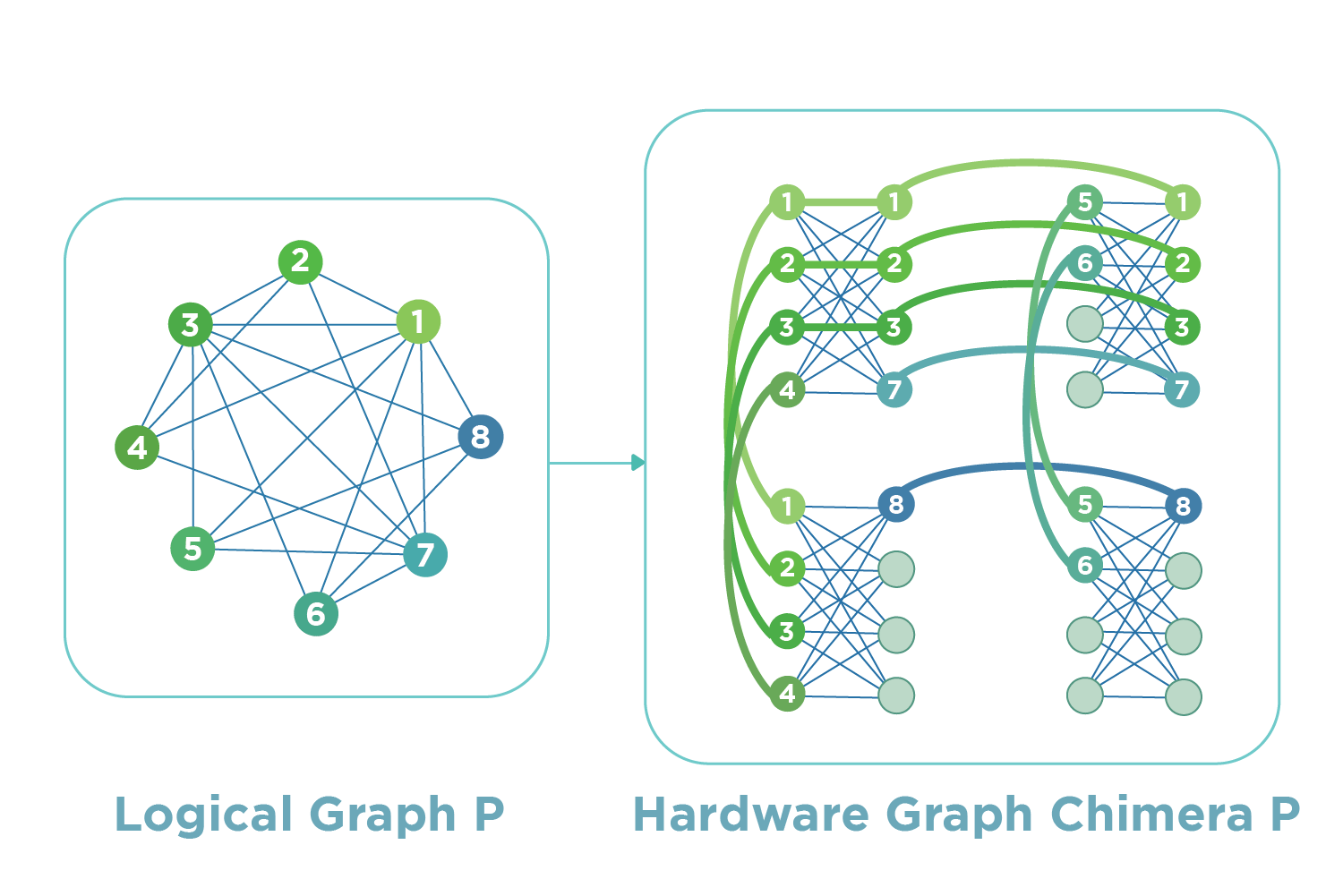}
	\caption{A high-level overview of the Minor Embedding problem where a QUBO formulation presented by a logical graph is embedded into a hardware graph.}
	\label{fig:problem1}
\end{figure}

\begin{definition}
Given a logical graph $P$ and a hardware graph $H$, the minor embedding problem seeks to find a mapping function $\phi: V_P \rightarrow \mathcal{P}(V_H)$ satisfying three embedding constraints:
\begin{enumerate}
    \item \textbf{Chain Connection}: We denote a subset of nodes in $H$ that are mapped from a node $u$ in $P$ as chain $\phi(u)$. $\forall u \in V_P$, any subgraph $H'$ of $H$ induced by a chain $\phi(u)$ from $H$ is connected.
    \item \textbf{Global connection}: For every edge $(u,v) \in E_P$, there exists at least one edge $(u', v') \in E_H$ such that $u' \in \phi(u)$ and $v' \in \phi(v)$.
    \item \textbf{One-to-Many}: Two chains $\phi(u)$ and $\phi(v)$ in the hardware graph do not have any common nodes $\forall u,v \in V_P, u \neq v$.
\end{enumerate}
\label{def:minor embedding problem}
\end{definition}
\noindent We note that an embedding which satisfies the above three embedding constraints is referred to as a \emph{feasible solution}. 

\subsection{Heuristic approaches for the minor embedding problem}

Heuristic methods employ a step-by-step approach to construct solutions. In the work of Cai et al.~\cite{Cai2014}, each step involves embedding a selected node from the logical graph into a suitable set of nodes in the hardware graph, while satisfying specific conditions. This iterative process continues until a feasible embedding is obtained. However, this method can encounter a problem when there is no appropriate set for the selected node. We call this an \emph{isolated problem}. To address this issue, the latest heuristic method, called \emph{ATOM}\cite{Ngo2023}, introduces the concept of \emph{adaptive topology}. Given a logical graph $P$ and a hardware graph $H$, this method consists of three main phases to find a feasible embedding from $P$ to $H$:
\begin{enumerate}
    \item \emph{Initialization:} A solution is initialized as $\phi \gets \emptyset$, and an embedded set is initialized as $U_P \gets \emptyset$. In addition, a permutation of the nodes in the logical graph, denoted by $O = (\Bar{a}_1, \dots, \Bar{a}_{|V_P|})$, is precomputed heuristically. This permutation is considered as a node order to guide the embedding process, where each node in the logical graph is iteratively embedded into a chain of nodes in the hardware graph.
    \item \emph{Path Construction:} In this phase, we iteratively embed nodes from $P$ into $H$ following the order of $O$ through $|V_P|$ steps. Specifically, in step $t$, the node $\Bar{a}_t$ is selected. Then, an algorithm, named \texttt{Node Embedding}, is designed to find a new embedding $\phi'$ such that the combination of $\phi'$ and $\phi$ is a feasible embedding for $P[U_P \cup \{\Bar{a}_t\}]$. If $\phi'$ exists, $\phi$ is updated with $\phi'$. This step is repeated for the subsequent $\Bar{a}_t$ in the order $O$ until $O$ is empty. Otherwise, in the cases where a feasible embedding $\phi'$ cannot be obtained for node $\Bar{a}_t$, it indicates the occurrence of an isolated problem. In such situations, phase 3 is triggered to handle this problem.
    \item \emph{Topology Adaption:} This phase is to handle the isolated problem. An algorithm, named \texttt{Topology Adapting}, is introduced to expand the current embedding $\phi$ to a new embedding $\phi_{ex}$ in such a way that the isolated problem no longer occurs. Then, it returns to phase 2 where $\phi$ is replaced by $\phi_{ex}$.
\end{enumerate}
The ATOM method guarantees to return a feasible solution after at most $3|V_P|$ iterations of phase 2. In addition, denoting the average degree of the logical graph as $d$, the \texttt{Node Embedding} and \texttt{Topology Adapting} algorithms have polynomial complexity, with $O(|V_H| \times d)$ and $O(|V_H|)$, respectively~\cite{Ngo2023}. Thus, ATOM has a total complexity of $O(|V_P|\times|V_H|\times d)$, making it efficient for hardware graphs in practical QA machines, which have qubit limits (i.e., $V_H$) ranging from $5000$ to $7000$~\cite{Vert2024}. However, one limitation of this approach is that the heuristic strategy used to determine the embedding order $O$ may lead to a high number of required qubits. With the use of RL, \NAMEB, is able to learn and provide better orders. Additionally, we incorporate \texttt{Node Embedding} and \texttt{Topology Adapting} as two subroutines in our method.

\subsection{Reinforcement Learning}

Reinforcement learning (RL) is a framework for addressing the problem of an RL-Agent learning to interact with an environment, formalized as a Markov Decision Process (MDP) $\mathcal{M}:=\langle\mathcal{S}, \mathcal{A}, \mathcal{T}, \mathcal{R}, \gamma\rangle$, where $\mathcal{S}$ is the set of possible states, $\mathcal{A}$ is the set of possible actions, $\mathcal{T}$ is the transition function, $\mathcal{R}(s, a)$ is the reward function, and $\gamma \in [0, 1]$ is a discount factor that determines the importance of future rewards. In this framework, the RL-Agent and the environment interact in discrete time steps. At time step $t$, the RL-Agent observes a state $s_t \in \mathcal{S}$ from the environment. Subsequently, the RL-Agent selects an action $a \in \mathcal{A}$ based on a policy $\pi:\mathcal{S} \rightarrow \mathbb{P}(\mathcal{A})$ which defines a probability distribution over actions given each state. Formally, the action is sampled as $a_t \sim \pi(\cdot \mid s_t)$. Upon executing action $a_t$, the RL agent receives a reward $r_t = \mathcal{R}(s_t, a_t)$ and transitions to a new state $s_{t+1}$ according to the transition function $ \mathcal{T}: \mathcal{S} \times \mathcal{A} \to \mathcal{S}$. The tuple of $(s_t, a_t, r_t, s_{t+1})$ is then stored in a roll-out buffer $\mathcal{D}$, which is used to update policy $\pi$ later. 

Next, we present the method for policy optimization. To begin, it is essential to define what is an optimal policy. A common metric for evaluating a policy is the \textit{state-value function}, which quantifies the expected cumulative reward when following policy $\pi$ from a given state.

Formally, given a policy $\pi$, the state-value function for state $s \in \mathcal{S}$ is defined as the expected sum of future discounted rewards:

\begin{align}
\mathcal{V}^{\pi}(s) = \mathbb{E}_{\pi} \left[ \sum_{t=0}^{\infty} \gamma^{t} r_{t} \mid s_0 = s \right],
\label{eqn:value_function}
\end{align}

\noindent where $\gamma \in (0,1]$ is the discount factor that controls the trade-off between immediate and future rewards.

The goal of RL is to find the \textit{optimal policy} $\pi^*$ that maximizes the long-term expected return for any initial state $s_0 \in \mathcal{S}$. This can be formulated as the following optimization problem:
    
\begin{align*}
\pi^* = \arg \max_{\pi} \mathcal{V}^{\pi}(s_0).
\end{align*}

Actor-Critic~\mbox{\cite{Mnih2016}} is a specific type of \textit{Policy Gradient} methods~\mbox{\cite{Sutton1999}} designed to find the optimal policy. In this method, the policy is represented by a parameterized learning model \textit{actor}, denoted as $\hat{\pi}_\theta$, where $\theta$ represents the trainable parameters. A key advantage of the Actor-Critic method over other policy gradient approaches is the inclusion of \textit{critic}, another parameterized learning model that estimates the state-value function $ \mathcal{V}^\pi$. Specifically, the critic is represented as $ \hat{\mathcal{V}}_\omega $, where $\omega$ denotes its trainable parameters. The critic helps reduce the variance of policy gradient estimates, leading to more stable learning.

The actor model is trained by minimizing the following loss:

\begin{align}
\mathcal{L}_{actor}(\theta) &=\mathbb{E}_{\hat{\pi}_\theta, a_t \sim \hat{\pi}_\theta(\cdot|s_t),s_{t+1} \sim \mathcal{T}(s_t,a_t)}\left[\log \hat{\pi}_\theta\left(a_t \mid s_t\right) (\mathcal{R}(s_t,a_t) + \gamma\hat{\mathcal{V}}_\omega(s_{t+1})- \hat{\mathcal{V}}_\omega(s_t))\right] \nonumber\\
&\approx \frac{1}{|\mathcal{D}|}\sum_{(s_t,a_t,r_t,s_{t+1}) \in \mathcal{D}}\left[\log \hat{\pi}_\theta\left(a_t \mid s_t\right) (r_t + \gamma\hat{\mathcal{V}}_\omega(s_{t+1})- \hat{\mathcal{V}}_\omega(s_t))\right]
  \label{eq:actor loss}
\end{align}

On the other hand, the critic model is trained by minimizing the following loss:

\begin{align}
\mathcal{L}_{critic}(\omega)&=\mathbb{E}_{\hat{\pi}_\theta, a_t \sim \hat{\pi}_\theta(\cdot|s_t),s_{t+1} \sim \mathcal{T}(s_t,a_t)} \left[(r_t + \gamma\hat{\mathcal{V}}_\omega(s_{t+1})- \hat{\mathcal{V}}_\omega(s_t))^2\right] \nonumber \\
& \approx \frac{1}{|\mathcal{D}|}\sum_{(s_t,a_t,r_t,s_{t+1}) \in \mathcal{D}} \left[(r_t + \gamma\hat{\mathcal{V}}_\omega(s_{t+1})- \hat{\mathcal{V}}_\omega(s_t))^2\right]
  \label{eq:critic loss}
\end{align}

The actor and critic are updated simultaneously. Given the update factors for the actor and critic as $\alpha_1$ and $\alpha_2$, the parameters $\theta$ of the actor model and $\omega$ of the critic model can be updated as:
\begin{align}
 \theta \leftarrow \theta + \alpha_1 \nabla_\theta \mathcal{L}_{actor}(\theta)
 \label{eq:actor update}
\end{align}

\begin{align}
 \omega \leftarrow \omega + \alpha_2 \nabla_\omega \mathcal{L}_{critic}(\omega)
 \label{eq:critic update}
\end{align}

\section{Proposed Solutions}
\label{sec:proposedapproach}

In this section, we present our baseline RL framework, NaiveRL, and our advanced solution, CHARME (Section \ref{sec:RLFrame}). The three key components of CHARME are introduced in Subsections 3.2-3.4 along with its theoretical analysis.

\subsection{RL Framework}\label{sec:RLFrame}
Here, we present \NAMEA\ as an initial RL framework for the minor embedding problem. In this framework, solutions are constructed by sequentially mapping one node from a given logical graph to one node in a hardware graph. This approach gives an initial intuition of how to formulate an optimization problem using an RL framework. We then identify the drawbacks of this framework that correspond to the challenges mentioned before. Accordingly, we introduce \NAMEB, a chain-based RL framework that effectively addresses the above drawbacks of \NAMEA. Unlike \NAMEA, \NAMEB\  constructs solutions by sequentially mapping one node from a given logical graph to a chain of nodes in a hardware graph. By doing this, \NAMEB\ is able to maintain the satisfaction of three embedding constraints after each step of embedding, thus ensuring the feasibility of the resulting solutions. 

\subsubsection{\textbf{\NAMEA\  - An Initial RL Framework}}
We consider a simple RL framework that constructs solutions by sequentially embedding one node in logical graph $P$ to one unembedded node in the hardware graph $H$. The algorithm terminates when the solution found satisfies three embedding constraints: chain connection, global connection, and one-to-many connection. Alternatively, it stops if there is no remaining unembedded node in $H$. We define the states, actions, transition function, and reward function in the RL framework as follows:
   
\noindent {\bf States:} A state $s$ is a combination of the logical graph $P$, the hardware graph $H$, and an embedding $\phi: V_P \rightarrow \mathcal{P}(V_H)$. Specifically, we denote the state at step $t$ as $s_t = \{P, H, \phi^{(t)}\}$.

\noindent {\bf Actions:} An action is in the form of $(u, v)$ with $u \in V_H$ and $v \in V_P$. Note that $v$ is an unembedded node that is not in the current embedding $\phi$. The action $(u,v)$ is equivalent to an embedding node $v$ in $P$ to an unembedded node $u$ in $H$.

\noindent {\bf Transition Function:} When an action $(u, v)$ is taken, a new solution $\phi'$ with $\phi'(v) = \phi(v) \cup \{u\}$ is obtained. The new state is a combination of the logical graph $P$, the hardware graph $H$, and the new solution $\phi'$.
    
\noindent {\bf Reward Function:} At a step $t$, the reward $r_{t}$ for selecting an action $(u,v)$ consists of two terms. The first term, named \emph{Verification} $\vartheta_{t}$, is determined based on the quality of the resulting solution after taking an action $(u,v)$. Specifically, if the solution $\phi^{(t)}$ satisfies the three embedding constraints, $\vartheta_{t
}$ is the number of actions taken to reach that state (which is equivalent to $t$). Otherwise, $\vartheta_{t}$ is set to 0.

In addition, to address the challenge of sparse rewards caused by the large search space, we introduce an additional component for the reward function, named \emph{Exploration} $\varepsilon_{t}$. This component is designed to encourage the RL framework to initially learn from existing solutions before independently exploring the search space. In particular, $\varepsilon_{t}$ is calculated by the difference between the action found in our \NAMEA\  framework at step $t$, denoted as $a_{t}$, and the action found by an existing step-by-step heuristic method~\cite{Ngo2023} at step $t$, denoted as $\Bar{a}_{t}$. The extent to which exploration contributes to the reward function is controlled by a variable $\sigma \in [0,1]$. In the initial stages of the training phase, we set the variable $\sigma$ to $0$. It implies that in the early steps of training, the policy of the RL framework mimics the heuristic strategy. As the policy is progressively updated with precomputed solutions, $\sigma$ gradually increases in proportion to the agent's success rate in finding solutions that satisfy the three embedding constraints. In sum, the reward function $r_{t}$ for a given state $s_{t}$ can be expressed as follows:
    \begin{equation}
    r_t = -\vartheta_{t} - \varepsilon_{t} = -\vartheta_{t} - \left[ 1 - \sigma \right]_+\left(a_{t}-\Bar{a}_{t}\right)^2
    \label{eq:simple:reward}
    \end{equation}

However, a major drawback of this naive approach is its inability to consistently produce feasible solutions that satisfy the constraints of the minor embedding problem. As demonstrated in the Experiment section, the feasibility rate remains low even for small logical graphs. This poses a significant bottleneck for the quantum annealing process, as finding a feasible embedding may require an unpredictable number of reruns. Therefore, an enhanced RL framework, that can ensure the feasibility of its solutions, is needed.

\subsubsection{\textbf{CHARME: A Chain-Based RL Solution for the Minor Embedding Problem}}

\begin{figure*}[h]
\centering
	\includegraphics[width=.8\linewidth]{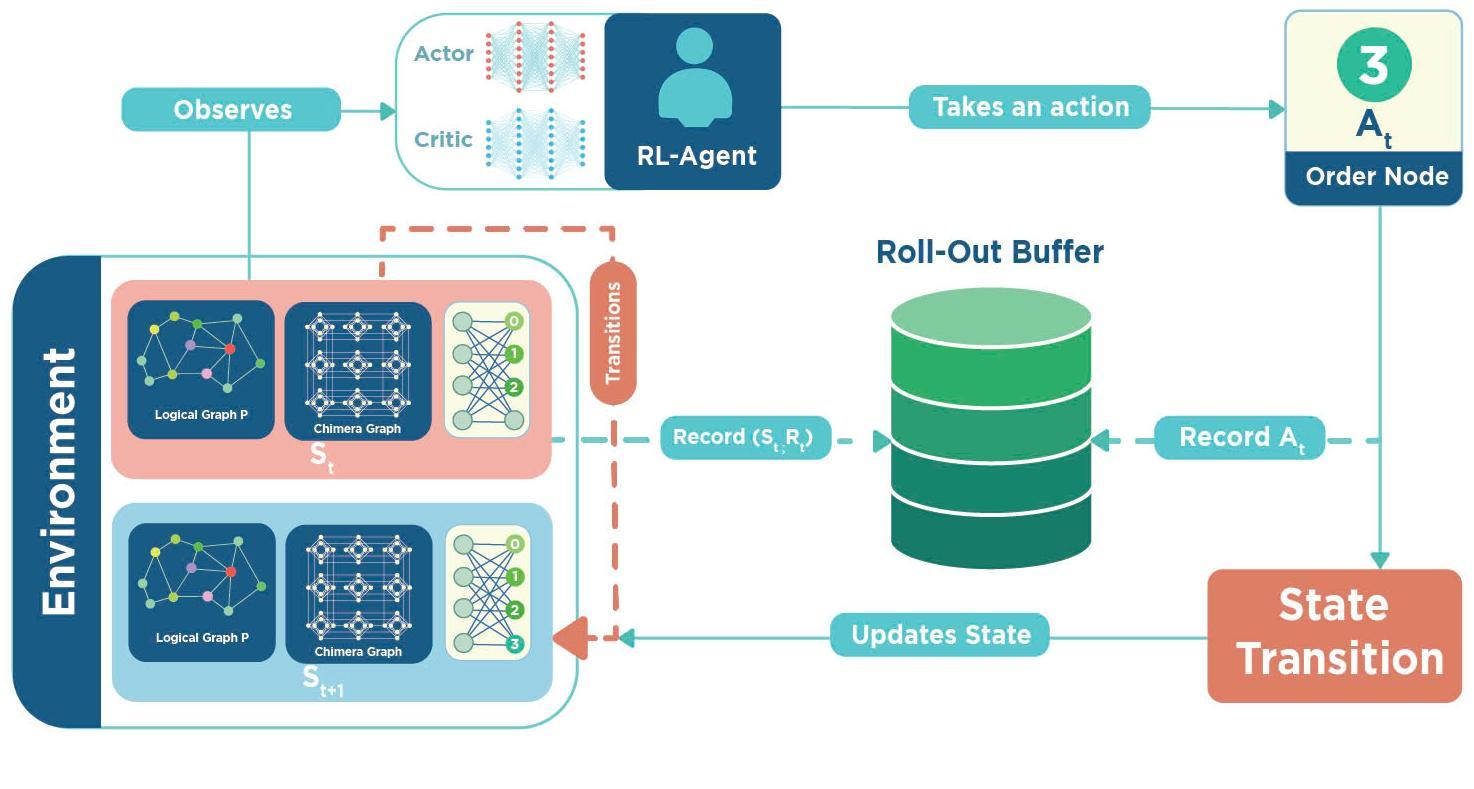}
	\caption{Workflow of the CHARME framework. The detailed architecture of the actor and critic is described in the subsequent part.}
	\label{fig:overview}
\end{figure*}

\begin{figure*}[h]
\centering
	\includegraphics[width=1\linewidth]{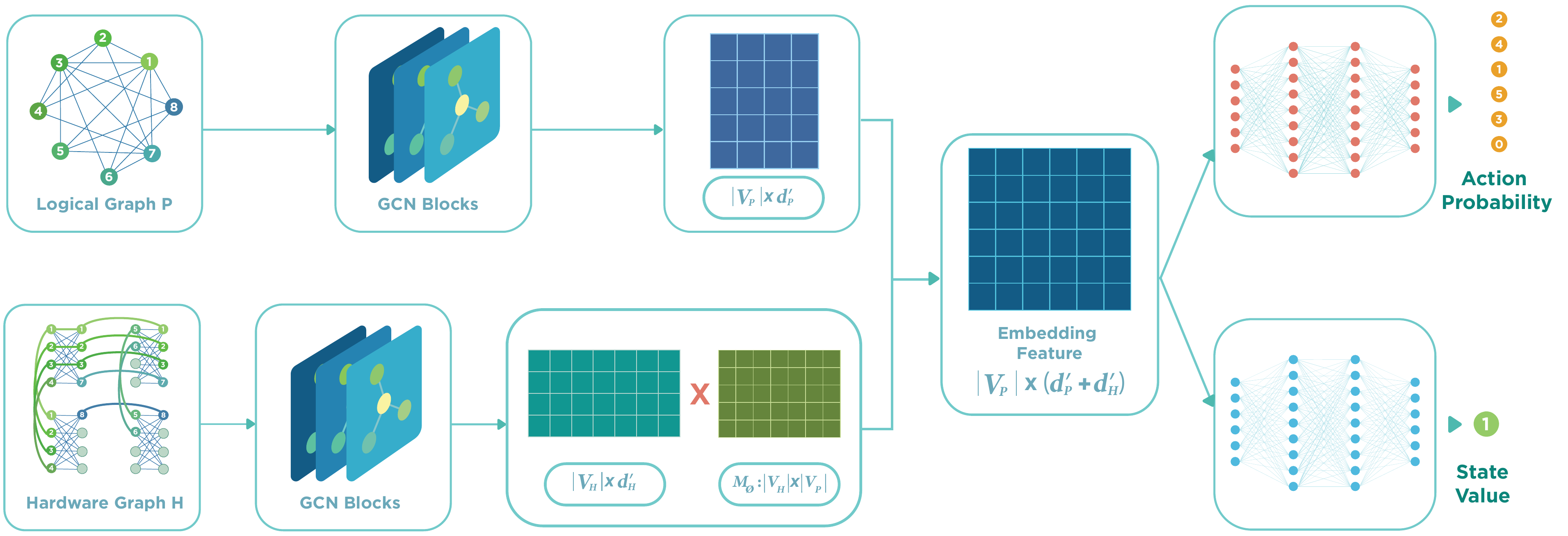}
	\caption{A GCN-based architecture of the models (actor and critic) presenting the policy of the RL-Agent}
	\label{fig:policy architecture}
\end{figure*}

To overcome the limitations of \NAMEA, we introduce \NAMEB, a chain-based RL framework that embeds one node in the logical graph into a chain of nodes in the hardware graph at each step. \NAMEB\  guarantees to find feasible solutions within exact $|V_P|$ turns of action selection and state transition (proven in Subsection~\ref{subsec:state_trans}). Consequently, compared to the naive approach, \NAMEB\ significantly reduces and simplifies the search space. In the following, we formally describe \NAMEB:
    
\noindent {\bf States:} We use the same definition of states as in the former framework. Here, we specify the components of states in more detail. The state $s_{t}$ at step $t$ contains:
    \begin{itemize}
        \item The logical graph $P$: We represent the logical graph $P$ as a tuple $(X_P^{(a)}, X_P^{(f)})$. $X_P^{(a)} \in \{0,1\}^{|V_P| \times |V_P|}$ indicates the adjacency matrix of $P$. Given the number of node features $d'_P$, $X_P^{(f)} \in \mathbb{R}^{|V_P| \times d'_P}$ indicates the node feature matrix of $P$. Subsequently, we specify the node features of $P$.
        \item The hardware graph $H$: We represent the logical graph $H$ as a tuple $(X_H^{(a)}, X_H^{(f)})$. $X_H^{(a)} \in \{0,1\}^{|V_H| \times |V_H|}$ indicates the adjacency matrix of $H$. Given the number of node features $d'_H$, $X_H^{(f)} \in \mathbb{R}^{|V_H| \times d'_H}$ indicates the node feature matrix of $H$. Subsequently, we specify the node features of $H$.
        \item The solution/embedding: We denote $\phi^{(t)}: V_P \rightarrow \mathcal{P}(V_H)$ as an embedding from $P$ to $H$ at step $t$. Specifically, if a node $v \in V_P$ is embedded in a node $u \in V_H$, then $u \in \phi(v)$.
    \end{itemize}
    
\noindent {\bf Actions:} At step $t$, we denote $U^{(t)}_P$ as a set of embedded nodes in $P$. In each step, an action $a_t$ is defined as the choice of an unembedded node $v \in V_P \setminus U^{(t)}_P$. From here, for simplicity, we consider each action as an unembedded node of $P$. Based on $a_t$, we can find the next feasible embedding $\phi^{(t+1)}$ of $P[U^{(t)}_P \cup \{a_t\}]$ in $H$.
    
\noindent {\bf Transition Function:} Because the given graphs $P$ and $H$ remain unchanged throughout an episode, we consider a transition from state $s^{(t)}$ to state $s^{(t+1)}$ as the transition from $\phi^{(t)}$ to $\phi^{(t+1)}$. The transition state $s^{(t)}$ to state $s^{(t+1)}$ is deterministic and does not depend on the policy. The algorithm for state transition (i.e., Algorithm~\mbox{\ref{alg:state transition}}) will be discussed in the subsequent section. In general, given an inputted state, Algorithm~\mbox{\ref{alg:state transition}} utilizes ATOM's subroutines~\mbox{\cite{Ngo2023}} (i.e., \texttt{Node Embedding} and \texttt{Topology Adapting} described in Section 2.2) to return a deterministic state as output.
   
\noindent {\bf Reward function:} At step $t$, a reward $r_t$ is determined to select a node $u \in P$ based on two components. The first component is the qubit gain compared to the current embedding, denoted as $c_t$. This component can be computed as the difference between the number of qubits of $\phi^{(t+1)}$ and $\phi^{(t)}$. The second component is the exploration term, which is similarly defined as that in the reward function of \NAMEA, with the difference being that the precomputed solution $\Bar{a}_t$ is obtained through our proposed exploration algorithm. This component is crucial in enhancing the training process of the \NAMEB\ framework. Specifically, at the start of the training process, the policy can be updated with high-quality embedding orders pre-computed by Algorithm~\ref{alg:order exploration}, rather than relying on randomly explored embedding orders. We recall that the action $a_t$ and the precomputed solution $\Bar{a}_t$ are presented as non-negative integers corresponding to the index of the selected node in the logical graph. Additionally, the exploration term is regulated by the variable $\sigma$, which progressively increases in proportion to the agent's success rate in finding feasible solutions. In sum, the reward $r_{t}$ is defined as follows:

\begin{equation}
    r_t = -c_t - \varepsilon_t = -(|\phi^{(t+1)}| - |\phi^{(t)}|) - \left[ 1 - \sigma \right]_+\left(a_{t}-\Bar{a}_{t}\right)^2
    \label{eq:hybridapproach:reward}
\end{equation}

In order to describe the workflow of \NAMEB, we denote a round of finding a solution for a pair of $(P, H)$ as an episode. Given a training set of logical and hardware graphs, \NAMEB\  runs multiple episodes to explore solutions of pairs of logical and hardware graphs from the training set. In one episode, the framework constructs a solution through a series of steps. Figure~\ref{fig:overview} shows the workflow of \NAMEB\ in an arbitrary step $t$. Specifically, at step $t$ of an episode, \NAMEB\ receives the state $s_{t}$ including the logical graph, the hardware graph, and the corresponding embedding, as input. Then, similar to \NAMEA, the actor-critic method is employed to represent and train the policy. Specifically, the actor maps the current state $s_t$ to an action $a_{t}$. The detailed architecture for the actor (and critic) is described in Section 3.2. Then, the state transition algorithm (Algorithm~\ref{alg:state transition}) takes the current state $s_{t}$ and the chosen action $a_{t}$ as input to produce the next state $s_{t+1}$. Finally, the environment updates the state $s_{t}$ to $s_{t+1}$, and calculates the reward $r_{t}$. The information in step $t$, including states, actions, and rewards, is stored in a roll-out buffer. The episode continues until the terminal state is reached where no unembedded node remains. After a certain number of training steps, all information in the roll-out buffer is used to update the actor and critic according to Equation~\ref{eq:actor update} and \ref{eq:critic update}. Then, a part of the roll-out buffer is randomly deleted to make space for information in subsequent episodes (i.e., $80\%$). In addition, $\sigma$ starts at $0$ and increases by $0.01$ after each update until $[1-\sigma]$ reaches $0$.

It is observable that for each episode the RL-Agent explores an embedding order $O = (a_1, \dots, a_{|V_P|})$ for a training pair $(P,H)$. By learning from reward signals along with embedding orders, \NAMEB\ is able to generate efficient embedding orders for unseen logical graphs and hardware graphs, consequently reducing the usage of qubits.

\subsection{Graph Representation and the Structure of the Policy}
In this section, we present an efficient representation of logical and hardware graphs and introduce a GCN-based structure for the models representing the policy.

As mentioned in the previous section, the logical graph $P$ at step $t$ is presented by a tuple $(X_P^{(a)}, X_P^{(f)})$. In this work, we use the constant feature matrix $X_P^{(f)} = [1]^{|V_P| \times 1}$ for all logical graphs $P$.
On the other hand, the hardware graph $H$ presented by a tuple $(X_H^{(a)}, X_H^{(f)})$. Specifically, the feature of a node $u$ in $H$ corresponds to the logical node embedded in $u$. If node $u$ is unembedded, the feature of $u$ is $-1$.

Based on the representation of $P$ and $H$, we propose a GCN-based structure for the models that represent the RL-Agent policy (Figure~\ref{fig:policy architecture}). Specifically, this structure is applied to the actor and the critic in the RL-Agent. The structure receives an input as a tuple of $(P, H, \phi)$ and returns the action probability of the actor and the state value of the critic. An important component of our proposed structure is a multilayer Graph Convolutional
Network (GCN)~\cite{Kipf2016} which is powerful for learning and analyzing graph-structured data~\cite{LeCun2015,Niepert2016}. In GCNs, the inputted graph is specified by an adjacency matrix $X^{(a)}$ and a feature matrix $X^{(f)}$. Those inputted matrices are propagated through a stack of $L$ layers represented by matrices $H^{(0)}, \dots, H^{(L)}$ by the layer-wise propagation rule as follows:
$$H^{(l+1)} = ReLU(\hat{D}^{\frac{-1}{2}}\hat{X}^{(a)}\hat{D}^{\frac{-1}{2}}H^{(l+1)}W^{(l)})$$  

Here, we have the adjacency matrix with added self-connections $\hat{X}^{(a)} = X^{(a)}+I$, the diagonal matrix of degrees $\hat{D}$ where $\hat{D}_{ii} = \sum_{j} \hat{X}^{(a)}_{ij}$, the trainable weight matrix of the l-\emph{th} layer $W^{(l)}$, hl{and the ReLU activation function.} The propagation rule is applied iteratively from $H^{(1)}$ to $H^{(L)}$ with the first layer $H^{(0)}$ initialized as $X^{(f)}$. For simplification, we denote this information passing mechanism as a non-linear function named $GCN$. Specifically, $GCN$ takes the adjacency matrix and the feature matrix as inputs and returns $H^{(L)}$ as output.

The flow of the structure starts with passing the representations of $P$ and $H$ through $GCN$. Subsequently, we obtain output matrices $Y_P \in \mathbb{R}^{|V_P| \times d'_P}$ and $Y_H \in \mathbb{R}^{|V_H| \times d'_H}$ respectively as follows:
\begin{equation}
    Y_P = GCN(X_P^{(a)}, X_P^{(f)})
\end{equation}
\begin{equation}
    Y_H = GCN(X_H^{(a)}, X_H^{(f)})
\end{equation}

We then group the nodes in $H$ that belong to the same chain in $\phi$ and aggregate their features. To do this, we present $\phi$ as a matrix $M_\phi \in \{0,1\}^{|V_H|\times |V_P|}$ such that for $\forall v \in V_P, u \in V_H$, $M_\phi[u,v] = 1$ if and only if $u \in \phi(v)$. As a result, we obtain the aggregated features $Y_{aggr} \in \mathbb{R}^{|V_P| \times d'_H}$ as follows:
\begin{align}
    Y_{aggr} = Y_H M_\phi
\end{align}
This step is important for several reasons. First, it eliminates unembedded nodes in the hardware graph. Information related to unembedded nodes is considered redundant because, given a fixed hardware topology, the position of unembedded nodes can be inferred from the embedded nodes. Hence, the information from the embedded nodes alone is sufficient to make a decision about the next action. As a result, filtering out unembedded nodes significantly reduces computational costs without hindering the convergence of the policy parameters. On the other hand, the information of nodes embedded in the same chain is aggregated. After aggregation, the output serves as the features of the chains in the hardware graph.

Next, we concatenate $Y_P$ and $Y_{aggr}$ to obtain the final embedding feature $Y_{final} \in \mathbb{R}^{|V_P| \times (d'_P+d'_H)}$. Subsequently, fully connected layers are applied to transform $Y_{final}$ into the desired output. The dimension of these layers varies depending on whether the desired output is the action probability or the state value.

We observe that the final embedding feature $Y_{final}$, which combines the information from the logical graph $P$, the hardware graph $H$, and the corresponding embedding $\phi$, has $O(|V_P|)$ dimensions. Thus, our proposed architecture addresses the challenge of state processing by efficiently combining the information from the graphs and the embedding.

\subsection{State Transition Algorithm}\label{subsec:state_trans}
\begin{algorithm}
\DontPrintSemicolon
  \KwInput{Current state $s_t = (P, H, \phi^{(t)})$, and an action $a_t$.}
  \KwOutput{The next state $s_{t+1}$}

    Let the set of current embedded node be $U^{(t)}_P\subseteq V_P$ which is inferred from $\phi^{(t)}$.
    
    $isolated := True$

    $\phi \leftarrow \phi^{(t)}$
    
    \While{$isolated$}
    {
        $\phi'$ := \texttt{Node Embedding}$(P, H, \phi, U^{(t)}_P, a)$
        
       \If{$\phi' \neq \emptyset$}{
            
            Initialize $\phi^{(t+1)}$ such that $\phi^{(t+1)}(v) := \phi(v) \cup \phi'(v)$ for $\forall v \in V_P$

            $X_{H}^{(f)}(u) := v \textrm{ with } \forall u \in V_H, v \in V_P, u \in \phi^{(t+1)}(v)$

            $isolated$ := False
       }
       \Else
       {   
            $\phi$:= \texttt{Topology Adapting}$(H, \phi)$
        }
    }
    \Return $s_{t+1} = (P, H, \phi^{(t+1)})$
\caption{\texttt{State Transition}}
\label{alg:state transition}
\end{algorithm}

We now present a state transition algorithm that computes the next state based on the current state and action. This algorithm integrates the \texttt{Node Embedding} and \texttt{Topology Adapting} algorithms introduced in \cite{Ngo2023} as subroutines.

Algorithm~\ref{alg:state transition} takes a current state $s_t$ at time step $t$ including a current logical graph $P$, a current hardware graph $H$, and current embedding $\phi^{(t)}$, as well as an action $a_t$ as input. It produces the next state $s_{t+1}$ including the logical graph $P$, the hardware graph $H$, and the next embedding $\phi^{(t+1)}$.

Before explaining this algorithm in detail, we recall the set of embedded nodes as $U^{(t)}_P$, which can be inferred from $\phi^{(t)}$. Algorithm~\ref{alg:state transition} must guarantee that the resulting next embedding $\phi^{(t+1)}$ is a feasible embedding of $P[U^{(t)}_P\cup \{a_t\}]$ to $H$. To realize this, we first initialize the decision variable $isolated$ as \emph{True} (line 2) and assign $\phi$ as $\phi^{(t)}$ (line 3). Then, the node $a_t$ is embedded into the current hardware graph $H$ by finding an additional embedding $\phi'$ through the \texttt{Node Embedding} algorithm. If the additional embedding $\phi'$ is not empty ($a_t$ is successfully embedded), Algorithm~\ref{alg:state transition} constructs a new embedding $\phi^{(t+1)}$ by combining $\phi$ and $\phi'$ (line 7). In addition, it updates new features for the hardware graph $H$ following a hardware representation discussed in the previous section (line 8). The next state $s_{t+1}$ is a combination of logical graph $P$, hardware graph $H$ with updated features, and new embedding $\phi^{(t+1)}$. In the end, we assign the value of $isolated$ to \emph{False}, finish the loop, and return the new state $s_{t+1}$. Otherwise, if $\phi' = \emptyset$, we cannot embed the node $a_t$ with the current embedding. We call this phenomenon the \textit{isolated problem}. In order to overcome this bottleneck, we expand the current embedding $\phi$ using the \texttt{Topology Adapting} algorithm (line 11) and repeat the whole process until we find a satisfied additional embedding $\phi'$.

From the lemmas provided in~\cite{Ngo2023}, it can be proven while assuming that the current embedding $\phi^{(t)}$ is feasible, the next embedding $\phi^{(t+1)}$ of the subsequent state $s_{t+1}$ returned by Algorithm~\ref{alg:state transition} is a feasible embedding of $P[U^{(t)}_P\cup \{a\}]$ in $H$. Additionally, the empty embedding $\phi^{(0)}$ initialized at step 0 is a feasible embedding of the empty logical graph $P[\emptyset]$ in $H$. Consequently, considering $T = |V_P|$, we can recursively infer that the final embedding $\phi^{(T)}$ corresponding to the terminal state $s_{T}$ obtained in the $T$\emph{-th} step is a feasible embedding of $P$ in $H$. Hence, with the state transition algorithm, \NAMEB\ effectively addresses the challenge of ensuring feasibility.

\subsection{Exploration Strategy}
\begin{algorithm}
\DontPrintSemicolon
  \KwInput{Set of generated logical graphs with $m$  elements $\mathcal{G}=\left(P_1, \ldots,P_m\right)$, the hardware graph $H = (V_H, E_H)$, the set of baseline orders $\mathcal{\Bar{O}} = \{\Bar{O}_1, \dots, \Bar{O}_m\}$, the sampling limit $D$ and the exploration limit $K$}
  \KwOutput{The embedding orders $\mathcal{O}=\left(O_1, \ldots,O_m\right)$}

  Initialize $\mathcal{O}=\left(O_1, \ldots,O_m\right)$ where each $O_j$ is a random permutation of the set of nodes $V_j$ with $j = \{1,2,\dots,m\}$
  
  Initialize potential score $\mu_j$ as $|V_H|$, for each graph $P_j$ with $j = \{1,2,\dots,m\}$, and store in $\mathcal{M}=\left\{\mu_1, \mu_2, \mu_3, \ldots, \mu_{m}\right\}$
  
  \While{$i\leq D$}{
    
    Sample a graph $P_i \in \mathcal{G}$ by following the selection probability $p(P_i|\mathcal{M})$ in Equation~\ref{eqn:graphprob}
    
    $\_, O', \_ :=\texttt{Order Refining}(P_i,H,0,\varnothing,\varnothing,K,F(O_i))$

    \If{$O' \neq \emptyset$ \textbf{ and } $F(O') < F(O_i)$}
    {
        $O_i \leftarrow O'$
        
        $\mu_i = F(O_i) \div F(\Bar{O}_i)$
    }

  }

  \Return $\mathcal{O}$
\caption{\texttt{Order Exploration}}
\label{alg:order exploration}
\end{algorithm}

A key challenge our proposed RL model faces is maintaining efficiency when training on large instances. As the number of nodes in the training logical graphs increases, the number of possible action sequences grows exponentially. Consequently, it becomes more challenging for the model to explore good action sequences for learning, potentially leading to an inefficient policy. In \NAMEB, we address this issue by incorporating an exploration term in the reward function, as defined in Equation~\mbox{\ref{eq:hybridapproach:reward}}. The intuition behind this exploration term is to guide the model to replicate precomputed actions $\Bar{a}_t$, during the initial training steps. As the policy "matures" through updates, the model gradually reduces its reliance on precomputed actions and begins exploring by itself. This approach allows the model to start from a region with high-quality solutions and efficiently explores from there, enhancing the overall efficiency of training.

Precomputed actions can be extracted from the solutions of the embedding method in~\cite{Ngo2023}. However, that method focuses on providing a fast minor embedding, so the sequence of precomputed actions is not carefully refined. In this section, we propose a technique for exploring efficient sequences of precomputed actions in training logical graphs, which makes the training process more efficient. To avoid confusion here, we mention the sequences that include precomputed actions as \emph{embedding orders}. We name our proposed technique as Order Exploration.

\subsubsection{Overview}
Given a logical graph $P = (V_P, E_P)$ with $T = |V_P|$ and a hardware graph $H = (V_H, E_H)$, an embedding order $O = (\Bar{a}_1, \dots, \Bar{a}_T)$ with $\Bar{a}_t \in V_P, t \in [1,T]$ results in a set of embeddings $\{\phi^{(1)}, \dots, \phi^{(T)}\}$. The position of an action in $O$ is the step in which the action is embedded in the hardware graph, so we denote an index of actions in $O$ as an embedding step. As mentioned in previous sections, given an embedding step $t$, the embedding $\phi^{(t)}$ is constructed by inputting Algorithm~\ref{alg:state transition} with $\phi^{(t-1)}$ and $\Bar{a}_{t-1}$ for $t \in [1,T-1]$. We note that the construction of $\phi^{(t)}$ may require the topology expansion (i.e. the function \texttt{Topology Adapting} is triggered). We call embeddings that require topology expansion as \textit{expansion embeddings}. Otherwise, we call them \textit{non-expansion embeddings}. The expansion and non-expansion notions are used in the remainder of this section. In addition, we denote $F(O) = |\phi^{(T)}|$ as the \emph{efficiency score} of $O$. \texttt{Order Exploration} aims to fast find embedding orders for training logical graphs such that their efficiency scores are as small as possible.

The overview of \texttt{Order Exploration} is described in Algorithm \ref{alg:order exploration}. Algorithm \ref{alg:order exploration} takes the inputs as the set of $m$ given training logical graphs, denoted as $\mathcal{G}=(P_1, \ldots, P_m)$, the set of baseline orders $\mathcal{\Bar{O}} = \{\Bar{O}_1, \dots, \Bar{O}_m\}$ (i.e., embedding orders of ATOM), the sampling limit $D$, and the exploration limit $K$. The training graph $P_i$ includes the set of nodes $V_i$ and the set of edges $E_i$. Given a logical graph $P_i \in \mathcal{G}$, we define the embedding order $O_i = (\Bar{a}_1,...,\Bar{a}_{|V_i|})$ as the sequence of precomputed actions for the training graph $P_i$. The goal of Algorithm \ref{alg:order exploration} is to quickly explore embedding orders for $m$ logical training graphs, denoted as $\mathcal{O}=(O_1, \ldots, O_m)$, such that their efficiency scores are as small as possible.

In general, Algorithm~\ref{alg:order exploration} operates by randomly selecting a graph $P_i \in \mathcal{G}$ and exploring its order. Specifically, the algorithm begins by initializing the embedding orders for logical graphs in the training set $\mathcal{G}$ (line 1). Given a graph $P_i = (V_i, E_i)$, $O_i$ is initialized as a random permutation of the set $V_i$. Then, each graph $P_i$ is assigned an equal potential score $\mu_i$ (line 2), which reflects the potential for further improvement of its embedding order. Each potential score is initially set to the worst-case ratio between two possible solutions, which is given by $|V_H|$.

Following initialization, the algorithm updates the embedding orders in $D$ steps. At each step, a graph $P_i$ is selected from the set $\mathcal{G}$ based on the distribution of potential scores (line 4). Specifically, the selection probability is determined as follows:

\begin{align}
p\left(P_i \mid \mathcal{M}\right)=\frac{\mu_i}{\sum_j \mu_j}
\label{eqn:graphprob}
\end{align}

Afterward, the order $O_i$ is refined by the subroutine \texttt{Order Refining} (line 5) which is detailed in the next section as Algorithm~\mbox{\ref{alg:branch and cut}}. If the refined order $O'$ is valid (i.e., $O' \neq \emptyset$) and achieves a better score than the current order $O_i$ (i.e., $F(O') < F(O_i)$), we re-assign $O_i$ as $O'$ (line 7), and update the potential score $\mu_i$ as follows:
\begin{equation}
\mu_i= \frac{F(O_i)}{F(\Bar{O}_i)}
\label{eqn:graphscore}
\end{equation}

We recall that $F(\Bar{O}_i)$ is the total number of qubits used for embedding $P_i$ resulting from ATOM~\cite{Ngo2023}. Thus, the score $\mu_i$, calculated by Equation~\ref{eqn:graphscore}, gives us the gap between our refined solution and the existing solution. Consequently, the larger the gap, the higher the likelihood that the corresponding graph will be selected for further refinement. It is also worth noting that if the initial potential scores are sufficiently large, every graph can be selected at least once. When the algorithm finishes discovering, it returns all feasible orders through the set $\mathcal{O}$ (line 9).

\subsubsection{Order Refining}
We observe that an embedding order is a permutation of $V_P$, so the space of embedding orders for $P$ is equivalent to the group of permutations of $V_P$. Therefore, finding embedding orders with a low efficiency score in this space is challenging. To address this issue, first, we establish an estimated lower bound on the efficiency scores for a special family of embedding orders. Then, based on the lower bound, we propose the subroutine \texttt{Order Refining} for fast exploration of orders, described in Algorithm~\ref{alg:branch and cut}.

Before going further, we introduce the notions used in this section. Given a finite set $S$, we define $\mathcal{E}^{(S)}$ as the group of permutations of $S$. From this, we infer that the space for embedding orders of $P$ is $\mathcal{E}^{(V_P)}$. We also refer to an order $O \in \mathcal{E}^{(V_P)}$ as a complete order. Considering two disjoint sets $S_1$ and $S_2$, we define the concatenation operator on the two groups $\mathcal{E}^{(S_1)}$ and $\mathcal{E}^{(S_2)}$ as the symbol $\oplus$. Specifically, given two orders $O_A = (\Bar{a}_1, \dots, \Bar{a}_{|S_1|}) \in \mathcal{E}^{(S_1)}$ and $O_B = (\Bar{b}_1, \dots, \Bar{b}_{|S_2|}) \in \mathcal{E}^{(S_2)}$, we have $O_A \oplus O_B = (\Bar{a}_1, \dots, \Bar{a}_{|S_1|}, \Bar{b}_1, \dots, \Bar{b}_{|S_2|}) \in \mathcal{E}^{(S_1 \cup S_2)}$. In addition, considering an order $O_A = \{\Bar{a}_1, \dots, \Bar{a}_{|S_1|}\} \in \mathcal{E}^{(S_1)}$, we denote the group of complete orders with the prefix $O_A$ as $\mathcal{E}^{(V_P)}_{O_A} \subseteq \mathcal{E}^{(V_P)}$. Specifically, an order $O \in \mathcal{E}^{(V_P)}_{O_A}$ is formed as $O = O_A \oplus O_B$ where $O_B \in \mathcal{E}^{(V_P \setminus O_A)}$.

Next, we consider a special prefix type, named a \textit{non-expansion prefix}. Given a subset $S_1 \in V_P$ with $t = |S_1| + 1$, and a prefix $O_A = \{\Bar{a}_1, \dots, \Bar{a}_{t-1}\} \in \mathcal{E}^{(S_1)}$, we say that $O_A$ is a non-expansion prefix if all orders $O = (\Bar{a}_1, \dots, \Bar{a}_{t-1}, \Bar{a}_{t}, \dots, \Bar{a}_{T}) \in \mathcal{E}^{(V_P)}_{O_A}$, result in a set of embeddings $\{\phi^{(1)}, \dots, \phi^{(T)}\}$ such that with $\forall t' \geq t$, $\phi^{(t')}$ is a non-expansion embedding.

Now, we consider constructing the embedding order by iteratively selecting an action as a node in $V_P$ and appending it to the current order. Each step in action selection is denoted as an embedding step. At the embedding step $t$, we assume that the embedding order $O_A = (\Bar{a}_1, \dots, \Bar{a}_{t-1})$ is a non-expansion prefix. In other words, the topology is no longer expanded from the embedding step $t$. Theorem~\ref{thm:bfexsubmodular} gives us the comparison between the number of qubits gained when selecting an action $v \in V_P$ at the embedding steps $t$ and $t' > t$ in this setting.

\begin{thm}
    Given a hardware graph $H = (V_H, E_H)$, a logical graph $P = (V_P,E_P)$, and the step limit $T = |V_P|$, assume that at the embedding step $t < T$, we have the sequence of selected actions $O_A = (\Bar{a}_1, \dots, \Bar{a}_{t-1})$ as a non-expansion prefix. Given a node $v\in V_P, v\notin O_A$, the number of qubits required to embed $v$ at step $t$ is not greater than that required when embedding $v$ at step $t'$ with $t < t' \leq T$.
    \label{thm:bfexsubmodular}
\end{thm}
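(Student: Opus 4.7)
The statement is a supermodular-type monotonicity: under non-expansion, the marginal qubit cost of inserting $v$ is weakly increasing in the set of already-embedded nodes. The plan is a direct feasibility-preservation argument that compares the minimum-chain optimization problems at the two steps.

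First, I would reduce the question to a clean combinatorial form. By the non-expansion hypothesis on $O_A$, the TOPOLOGY ADAPTING branch of Algorithm~\ref{alg:state transition} is never invoked along any continuation, so NODE EMBEDDING alone must embed $v$. The number of qubits required to embed $v$ at step $t$ is therefore the minimum size $k_t$ of a connected set $C \subseteq F_{t-1}$ for which, for every $u \in N_P(v)\cap U_P^{(t-1)}$, there is an edge in $E_H$ from $C$ to $\phi^{(t-1)}(u)$; here $F_{t-1} := V_H \setminus \bigcup_{u \in U_P^{(t-1)}} \phi^{(t-1)}(u)$ denotes the set of free qubits. Define $k_{t'}$ analogously for the state reached at step $t'$ along any non-expansion continuation of $O_A$.

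Next, I would record two monotonicities that follow from non-expansion: (i) $U_P^{(t-1)} \subseteq U_P^{(t'-1)}$ and $F_{t'-1} \subseteq F_{t-1}$, since the intermediate embedding steps only consume free qubits and enlarge the embedded set; and (ii) $\phi^{(t-1)}(u) = \phi^{(t'-1)}(u)$ for every $u \in U_P^{(t-1)}$, since without TOPOLOGY ADAPTING the NODE EMBEDDING subroutine in ATOM only writes to the chain of the node being inserted and leaves all previously embedded chains untouched. The argument then closes with a one-line exchange: take an optimal chain $C^*$ witnessing $k_{t'}$; then $C^* \subseteq F_{t'-1} \subseteq F_{t-1}$, and for every $u \in N_P(v)\cap U_P^{(t-1)} \subseteq N_P(v)\cap U_P^{(t'-1)}$ the adjacency of $C^*$ to $\phi^{(t'-1)}(u) = \phi^{(t-1)}(u)$ is exactly the adjacency needed at step $t$. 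Hence $C^*$ is feasible at step $t$, so $k_t \leq |C^*| = k_{t'}$.

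The main obstacle I expect is a careful justification of property (ii). If an implementation of NODE EMBEDDING could silently extend pre-existing chains even when TOPOLOGY ADAPTING is not triggered, then $\phi^{(t'-1)}(u)$ might strictly contain $\phi^{(t-1)}(u)$ and the optimal $C^*$ could attach through a qubit in the extension $\phi^{(t'-1)}(u) \setminus \phi^{(t-1)}(u)$. Handling that case would require refining the one-line exchange into an inductive rerouting argument: using the connectivity of $\phi^{(t'-1)}(u)$ and the fact that the extension qubits are free at step $t-1$, one reassigns a subset of extension qubits from $u$'s chain to $v$'s chain while preserving connectivity and global connection, and verifies step-by-step over the intermediate actions that this reassignment does not enlarge $v$'s chain. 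For the standard ATOM implementation, however, property (ii) holds outright and the direct argument above suffices.
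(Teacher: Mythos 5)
Your argument is correct and follows essentially the same route as the paper's proof: both reduce to the observation that, under the non-expansion hypothesis, the set of free qubits only shrinks and previously built chains are never modified between steps $t$ and $t'$, so any witness for embedding $v$ at the later step remains feasible at the earlier step and the minimum cost is monotone non-decreasing. The one substantive difference is in how that cost is formalized: you take $k_t$ to be the size of a minimum connected set of free qubits adjacent to every embedded neighbor's chain, whereas the paper's $C(v\mid t)$ in Equation~\ref{eq:qubit_gain} is the minimum over a root $u'$ of the size of the union of shortest \emph{clean paths} from $u'$ to each neighbor chain, i.e.\ the quantity NODE EMBEDDING actually computes and the one later used in the lower bound of Equation~\ref{eqn:appqubits}; your exchange argument carries over verbatim to that algorithmic cost (shortest clean paths can only lengthen as free qubits are consumed), so the mismatch is harmless. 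Your concern about NODE EMBEDDING silently extending pre-existing chains is precisely what the non-expansion hypothesis is there to rule out, so the fallback rerouting argument you sketch is not needed.
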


\begin{proof}
     We denote the embedding at the embedding step $t-1$ as $\phi^{(t-1)}: V_P \rightarrow V_H$. Specifically, the embedding $\phi^{(t-1)}$ is constructed by embedding actions $\Bar{a}_1,\dots,\Bar{a}_{t-1}$ with each action as a node in the logical graph $P$. At the embedding step $t$, we denote the set of nodes in $V_P$ that are already embedded in the hardware as $U^{(t)}_P = \{v|v \in V_P, \phi^{(t-1)}(v) \neq \emptyset\}$, while we denote the set of nodes in $V_H$ that are embedded by a node in the logical graph as $U^{(t)}_H = \{u|u \in V_H, \exists v \in U^{(t)}_P, u \in \phi^{(t-1)}(v)\}$. We note that the set $U^{(t)}_P$ can be considered as an unordered representation of the sequence $O_A = \{\Bar{a}_1,\dots,\Bar{a}_{t-1}\}$ because it contains all elements of $O_A$. For mathematical convenience in this proof, we primarily use the set $U^{(t)}_P$ to represent the sequence $O_A$.
    
    We define a path of length $L$ in the hardware graph as $Z = \{z_1, \dots, z_{L}\}$ with $(z_i, z_{i+1}) \in E_H, i\in [1,L-1]$. We consider a path $Z= \{z_1, \dots, z_{L}\}$ to be a \emph{clean path} at time $t$ if $z_i \in V_H \setminus U^{(t)}_H$ with $i \in [1, L-1]$ and $z_i \in U^{(t)}_H$ with $i = L$. From that, we denote the set of all possible clean paths in the embedding step $t$ as $\mathcal{Z}^{(t)}$. Given a hardware node $u \in V_H$ and a logical node $v\in V_P$, we denote the set of all clean paths from $u$ to the chain $\phi^{(t-1)}(v)$ as $\mathcal{Z}^{(t)}_{uv} = \{Z = \{z_1,\dots,z_{L}\}|Z \in \mathcal{Z}^{(t)}, z_1 = u, z_L \in \phi^{(t-1)}(v)\}$. Note that $\mathcal{Z}^{(t)}_{uv} = \emptyset$ if $\phi^{(t-1)}(v) = \emptyset$. Then, we denote the shortest clean path at embedding step $t$ from $u$ to $\phi^{(t-1)}(v)$ as \begin{equation}
        \tilde{Z}^{(t)}_{uv} = \arg \min_{Z \in \mathcal{Z}^{(t)}_{uv}} |Z|
        \label{eq:shortest_clean_path}
    \end{equation}
    At embedding step $t$, given an unembedded logical node $v \in V_P \setminus U^{(t)}_P$, we denote the number of qubits gained for embedding $v$ into $H$ as $C(v|t)$. From Algorithm~\ref{alg:state transition}, we find an unembedded hardware node $u \in V_H \setminus U^{(t)}_H$ such that the size of the union of the shortest clean paths from $u$ to chains of neighbors of $v$ is minimized. Thus, denoting the set of neighbors of $v$ in the logical graph as $N(v)$, we can calculate $C(v|t)$ as follows:
 \begin{equation}
        C(v|t) = \min_{u' \in V_H \setminus U^{(t)}_H} \left |\bigcup_{v' \in N(v)} \tilde{Z}^{(t)}_{u'v'}\right |
        \label{eq:qubit_gain}
    \end{equation}

    Next, we consider embedding the node $v$ at embedding step $t' = t + \delta$. We assume that an arbitrary set of actions $\Delta = \{\Bar{a}_t,\dots,\Bar{a}_{t+\delta-1}\}$ is embedded in intermediate embedding steps such that $v \notin \Delta$. We update the set $U^{(t')}_P = U^{(t)}_P \cup \Delta$ and $U^{(t')}_H = U^{(t)}_H \cup \bigcup_{i \in [t, t+\Delta-1]} \phi^{(i)}(\Bar{a}_i)$ based on the assumption that there is no expansion embedding between the embedding step $t$ and $t'$. Then, we establish the relation between $C(v|t)$ and $C(v|t')$. First, we observe that $U^{(t)}_H \subset U^{(t')}_H$. Thus, we imply that $\mathcal{Z}^{(t)} \subset \mathcal{Z}^{(t')}$ followed by $\mathcal{Z}^{(t)}_{uv} \subseteq \mathcal{Z}^{(t')}_{uv}$ for $u\in V_H, v \in V_P$. As a result, from Equation~\ref{eq:shortest_clean_path}, we have $|\tilde{Z}^{(t)}_{uv}| \leq |\tilde{Z}^{(t')}_{uv}|$. Then, based on Equation~\ref{eq:qubit_gain}, given an unembedded node $v \in V_P \setminus U^{(t)}_P$, we have: 
    $$C(v|t) \leq C(v|t')$$
    Therefore, the theorem is proven.
\end{proof}

\begin{algorithm}
\DontPrintSemicolon
  \KwInput{The training logical graph $P = (V_P,E_P)$, hardware graph $H = (V_H, E_H)$, the embedding step $t$, the current embedding $\phi$, the prefix $O_A = (\Bar{a}_1, \dots \Bar{a}_{t-1})$, the number of recursive steps $k$, and the qubit threshold $\zeta$}
  \KwOutput{Return a message variable $msg$, the selected suffix $O_B = (\Bar{a}_t, \dots \Bar{a}_T)$, and the number of remaining recursive steps $k'$}

  Initialize $T \leftarrow |V_P|$

    \If{$t > T$}
  {
    \If{$|\phi| < \zeta$} {
        \Return $\textrm{True}$, $\varnothing$, $k-1$
    } \Else {
        \Return $\textrm{False}$, $\varnothing$, $k-1$
    }
  }

  $\Bar{F} \leftarrow |\phi|$

  $U^{(t)}_P \leftarrow \{\Bar{a}_1, \dots, \Bar{a}_{t-1}\}$
  
  \For{$v \in V_P \setminus U^{(t)}_P$}{
    $\phi' := \texttt{State Transition}((P, H, \phi), v)$

    $C(v|t) \leftarrow |\phi'| - |\phi^{(t-1)}|$
    
    $\Bar{F} \leftarrow \Bar{F} + C(v|t)$
  }

  \If{$ \Bar{F} > \zeta$}{
    \Return $\textrm{False}$, $\varnothing$, $k-1$
  } 

  $k' = k$

  \While{$k' > 0$} {
    Randomly select $v \in V_P \setminus U^{(t)}_P$

    $\phi' := \texttt{State Transition}((P, H, \phi), v)$

    $msg, O_B, k' \leftarrow \texttt{Order Refining} \left(P, H, t + 1, \phi', O_A \oplus (v), k' - 1, \zeta\right)$

    \If{$msg$ is True}{
        \Return $\textrm{True}$, $(v) \oplus O_B$, $k'$
    }
  }

  \Return $\textrm{False}$, $\varnothing$, $0$

\caption{\texttt{Order Refining}}
\label{alg:branch and cut}
\end{algorithm}

Based on Theorem \ref{thm:bfexsubmodular}, given a non-expansion prefix $O_A = (\Bar{a}_1, \dots, \Bar{a}_{t-1})$, we establish the lower bound $\Bar{F}$ on the efficiency score of all embedding orders $O = (\Bar{a}_1, \dots, \Bar{a}_{t-1}, \Bar{a}_t, \dots, \Bar{a}_T) \in \mathcal{E}^{(V_P)}_{O_A}$ as:

\begin{align}
F(O) &= \left|\phi^{(t-1)}\right|+\sum_{t' \in [t,T]} C(\Bar{a}_{t'} \mid t') \nonumber\\
&\geq\left|\phi^{(t-1)}\right|+\sum_{t' \in [t,T]} C(\Bar{a}_{t'} \mid t)\nonumber\\ &=\left|\phi^{(t-1)}\right|+\sum_{v \in V_P \setminus O_A} C(v \mid t) = \Bar{F} 
\label{eqn:appqubits}
\end{align}

Subsequently, in our experiments, we show that for the general prefix $O_A$, Theorem~\ref{thm:bfexsubmodular} is still valid for most pairs $t$ and $t'$, proving the usefulness of the lower bound $\Bar{F}$ in general.

We now introduce a subroutine to explore embedding orders, described in Algorithm~\ref{alg:branch and cut}. Algorithm~\ref{alg:branch and cut} takes a training logical graph $P = (V_P, E_P)$ with $T=|V_P|$, the hardware graph $H = (V_H, E_H)$, the current embedding step $t$, the current embedding $\phi$, the prefix order $O_A = \{\Bar{a}_1, \dots , \Bar{a}_{t-1} \}$, the number of recursive steps $k$, and the qubit threshold $\zeta$. It is important to note that $\phi$ is the embedding of nodes in the prefix $O_A$. The objective is to find a suffix $O_B$ such that the efficiency score of $O = O_A \oplus O_B$ is less than $\zeta$ within $k$ recursive steps. Specifically, Algorithm \ref{alg:branch and cut} returns a message variable $msg$, the suffix $O_B$, and the number of remaining recursive steps $k'$. The message variable $msg$ indicates whether a feasible suffix $O_B$ is found. If $msg$ is equal to $False$, $O_B$ is assigned as $\varnothing$.

Here, we provide a detailed explanation of Algorithm~\ref{alg:branch and cut}. First, we set $T$ as the total number of nodes in the logical graph (line 1). If there are no more nodes to consider ($t > T$), Algorithm~\ref{alg:branch and cut} compares the qubit usage of the current embedding $\phi$ with the threshold $\zeta$, returning the appropriate message variable $msg$ (lines 2 - 6). Then, the estimated lower bound $\Bar{F}$ is calculated based on Equation~\ref{eqn:appqubits} (lines 7 - 12). If $\Bar{F}$ exceeds the qubit threshold $\zeta$, Algorithm~\ref{alg:branch and cut} terminates the current recursion (lines 13 - 14). Subsequently, a random node $v$ is selected and the next recursion at the embedding step $t+1$ is initiated with the updated prefix $O_A \oplus (v)$ and the number of remaining recursive steps $k'$ (lines 16 - 21). The subroutine, triggered at embedding step $0$, can return a complete embedding order with an efficiency score less than $\zeta$.

\section{Experiments}
\label{sec:experiments}

This section presents our experimental evaluation with two focuses: 1) confirming the efficiency of our proposed exploration strategy, 2) showing the limitation of NaiveRL and 3) showing the superiority of \NAMEB\ by comparing its performance to three state-of-the-art methods: OCT-based~\cite{Goodrich2018},Minorminer~\cite{Cai2014}, and ATOM~\cite{Ngo2023} in terms of the number of qubits used and the total running time.

\subsection{Setup}
\textbf{Dataset.} We conduct our experiments on synthetic and real datasets. For the synthetic dataset, we generate logical graphs using the Barabási-Albert (BA) model. We then categorize the logical graphs of the synthetic dataset into two independent sets: a training set and a test set. The training set, denoted as $\mathcal{G}_{syn}$, includes $70$ BA graphs with $n = 150$ nodes and degree $d = 10$. The test set consists of $600$ new graphs that are generated by varying the number of nodes $n \in \{80, 100, 120, 140\}$ and the degree $d \in \{2, 5, 10\}$. Specifically, for each pair $(n,d)$, we generate $50$ instances. Thus, in total, we have $50\times 4 \times 3 = 600$ instances in the test set. If there is no further explanation about the hardware setting, the hardware graph is set in form of Chimera topology which is a $45 \times 45$ grid of $K_{4,4}$ unit cells~\cite{Boothby2016}. The experiment on the synthetic dataset aims to evaluate the performance of \NAMEB\ on logical graphs of the same distribution.

Furthermore, we also conducted the experiment on a real dataset generated based on the QUBO formulations of test instances of the \emph{Target Identification by Enzymes (TIE)} problem proposed in~\cite{ngo2023qutie}. In particular, different metabolic networks for the TIE problem result in different QUBO formulations. As we mentioned in previous sections, each specific QUBO formulation corresponds to a logical graph. Thus, we can generate various logical graphs by varying the metabolic networks for the TIE problem. In this experiment, we collect $185$ metabolic networks from the KEGG database \cite{Kanehisa2000}. Then, we generate the corresponding logical graphs using the QUBO formulations with these metabolic networks as input. In the collected graphs, the number of nodes ranges from $6$ to $165$, while the number of edges ranges from $22$ to $578$. We split the dataset into three groups based on the ratio between the number of nodes $n$, and the number of edges $m$: low density group ($m \leq 2n$), medium density group ($2n < m \leq 3n$), and high density group ($m > 3n$). We use $70$ graphs for training and preserve the remaining for testing. We denote the training set $\mathcal{G}_{real}$. This experiment aimed to illustrate the effectiveness of our hybrid RL model in real-world scenarios.

\textbf{Benchmark.} To demonstrate the effectiveness of our Order Exploration strategy, we establish a comparative greedy strategy referred to as Greedy Exploration. In Greedy Exploration, nodes in the logical graph are selected for embedding in a recursive manner at each step. The order of node selection for recursion is determined by the number of additional qubits required to embed each node. Additionally, recursion at a given step is terminated if the number of embedded qubits exceeds the best solution found up to that point. 

Then, we proceed to compare the performance of our end-to-end framework, \NAMEB, with three state-of-the-art methods for the minor embedding problem, as outlined below:
\begin{itemize}
    \item \textit{OCT-based~\cite{Goodrich2018}:} Following the top-down approach, it minimizes the number of qubits based on their proposed concept of the virtual hardware framework. In this experiment, we use the most efficient version of OCT-based, named \emph{Fast-OCT-reduce}. Although OCT-based is effective in reducing the number of qubits, it suffers from high computational complexity, leading to a long running time to find a solution. This can pose a bottleneck for the QA process, especially when the frequency of QA usage has increased rapidly.

    \item \textit{Minorminer~\cite{Cai2014}:} A heuristic approach developed commercially by D-Wave, this method finds solutions by consecutively adding nodes of the logical graph to a chain in the hardware graph without considering the isolated problem (a node in the hardware graph can be embedded by more than one node from the logical graph). Then, the solution with overlapping embedded nodes is refined by iteratively removing and adding better chains. The algorithm terminates when the time limit is reached or a feasible solution is achieved. Minorminer is an efficient approach for sparse logical graphs.

    \item \textit{ATOM~\cite{Ngo2023}:} This approach is a heuristic method that finds solutions based on the concept of adaptive topology. The strength of this method is the ability to find solutions in short running times without compromising their quality. In addition, ATOM uses a heuristic strategy to decide the order of embedding nodes. Thus, by comparing its performance to that of ATOM, we can assess the effectiveness of RL in leveraging the order selection strategy.

    \item \textit{ATOM (random):} This method is a variant of ATOM presented in \cite{Ngo2023}. The difference is that ATOM (random) uses a random strategy to decide the embedding order. Specifically, the next node of the logical graph selected to be embedded in the hardware graph is chosen randomly from the set of unembedded nodes of the logical graph.

    \item \textit{ATOM (degree):} This method is another variant of ATOM presented in \cite{Ngo2023}. The difference is that ATOM (degree) uses a degree-based strategy to decide the embedding order. Specifically, the next node of the logical graph selected to be embedded is the node with the highest degree from the set of unembedded nodes of the logical graph.
\end{itemize}

\textbf{Hyper-parameter Selection.} In the exploration phase, we apply Order Exploration, described in Algorithm~\ref{alg:order exploration}, to precompute embedding orders for each graph $P_i \in \mathcal{G}_{syn} \cup \mathcal{G}_{real}$. The constants $\beta_i$ for $i\in[1,m]$ are determined as the solutions for $P_i$ found by ATOM~\cite{Ngo2023}. Specifically, we set the sampling limit $D$ as $10000$, the exploration limit $K$ as $10^6$ ,and use up to $1024$ threads running in parallel to explore embedding orders.

For training, we set the number of $GCN$ layers and fully connected layers for both actor and critic models as $L=3$ and $C=3$, respectively. The learning rate for the actor model is $3\times10^{-4}$, while the learning rate for the critic model is $10^{-5}$. The number of episodes to update the policy is $100$, and the batch size for each update is $290$.

For inference, due to the stochasticity of ML models, we sample $10$ solutions for each test case, and report the solution with the least number of qubits. In addition, we use multiprocessing to sample solutions, so the actual running time of inference is equal to the time to sample one solution.

\begin{table*}[ht]
\centering
\resizebox{\textwidth}{!}{
\begin{tabular}{c|ccc|ccc|ccc|ccc}
\toprule
\( n \) &
  \multicolumn{3}{c|}{80} &
  \multicolumn{3}{c|}{100} &
  \multicolumn{3}{c|}{120} &
  \multicolumn{3}{c}{140}\\ \cmidrule{2-13}
d &
  \multicolumn{1}{c}{2} &
  \multicolumn{1}{c}{5} &
  10 &
  \multicolumn{1}{c}{2} &
  \multicolumn{1}{c}{5} &
  10 &
  \multicolumn{1}{c}{2} &
  \multicolumn{1}{c}{5} &
  10 &
  \multicolumn{1}{c}{2} &
  \multicolumn{1}{c}{5} &
  10 \\ \midrule
Early embedded gain &
  \multicolumn{1}{c}{5.36} &
  \multicolumn{1}{c}{14.09} &
  17.15 &
  \multicolumn{1}{c}{6.01} &
  \multicolumn{1}{c}{15.28} &
  23.19 &
  \multicolumn{1}{c}{6.28} &
  \multicolumn{1}{c}{17.75} &
  30.66 &
  \multicolumn{1}{c}{7.07} &
  \multicolumn{1}{c}{21.78} &
  36.85 \\ 
Late embedded gain &
  \multicolumn{1}{c}{8.94} &
  \multicolumn{1}{c}{18.73} &
  25.70 &
  \multicolumn{1}{c}{11.57} &
  \multicolumn{1}{c}{22.81} &
  34.62 &
  \multicolumn{1}{c}{12.20} &
  \multicolumn{1}{c}{29.14} &
  56.02 &
  \multicolumn{1}{c}{17.75} &
  \multicolumn{1}{c}{38.72} &
  67.02 \\ \bottomrule
\end{tabular}
}
\caption{Comparison of the number of qubits required to embed a randomly selected node into a hardware graph before (early embedded gain) and after (late embedded gain) the expansion of the topology.}\label{befor-after-expand-table}
\end{table*}

\begin{figure*}[t]
    \includegraphics[width=\linewidth]{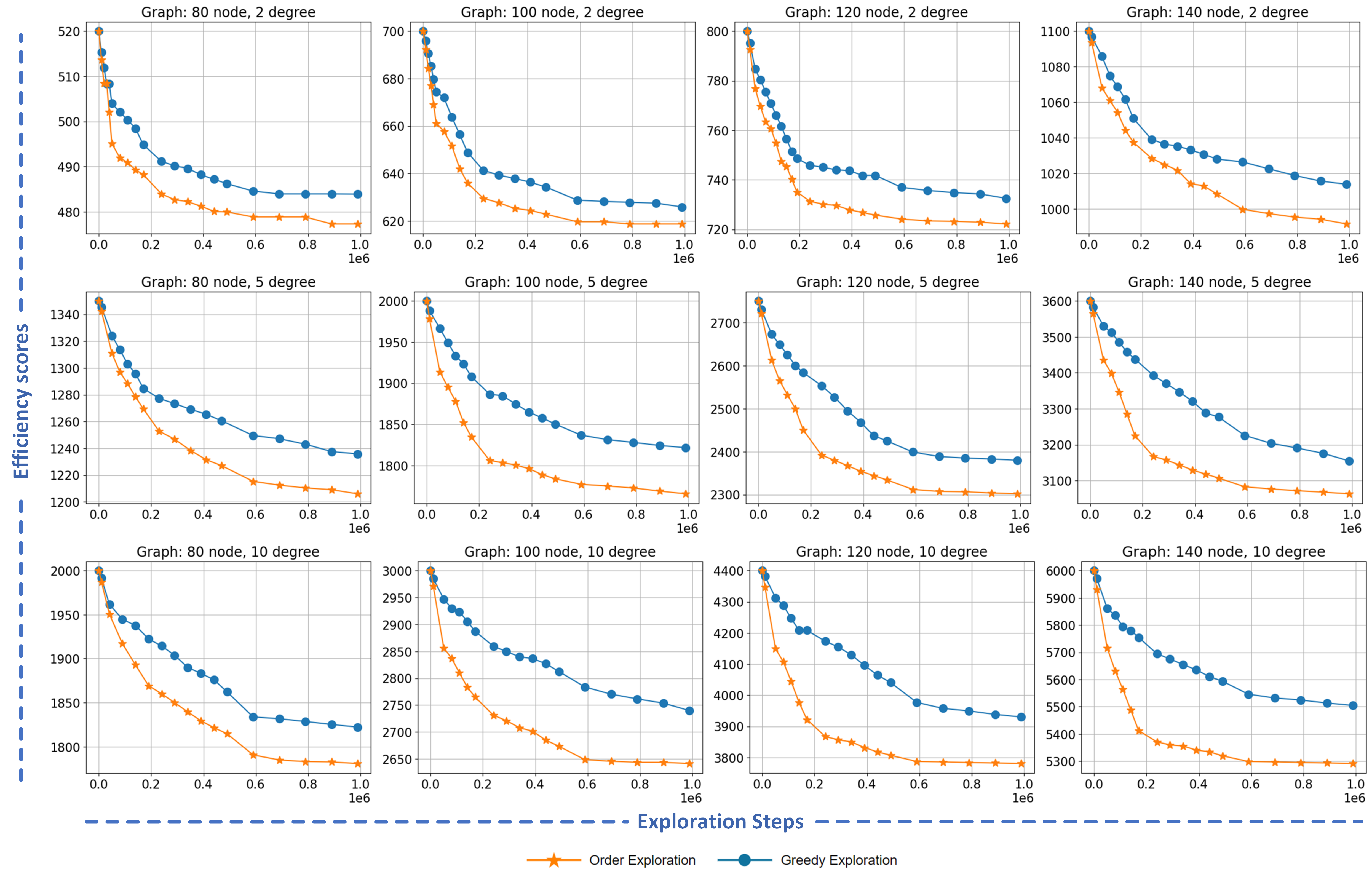}
    \caption{The figure highlights the performance of Order Exploration and Greedy Exploration in discovering embedding orders for training graphs in synthetic training sets under various settings of $(n,d)$. In subfigures, the x-axis denotes the number of exploration steps, while the y-axis represents the corresponding efficiency scores of the resulting orders. As a result, in these subfigures, each orange (blue) point indicates the best efficiency score of the embedding order resulted by Order Exploration (Greedy Exploration) after a specific number of exploration steps.}
    \label{fig:exploreR1}
\end{figure*}

\begin{figure*}[t]
    \includegraphics[width=\linewidth]{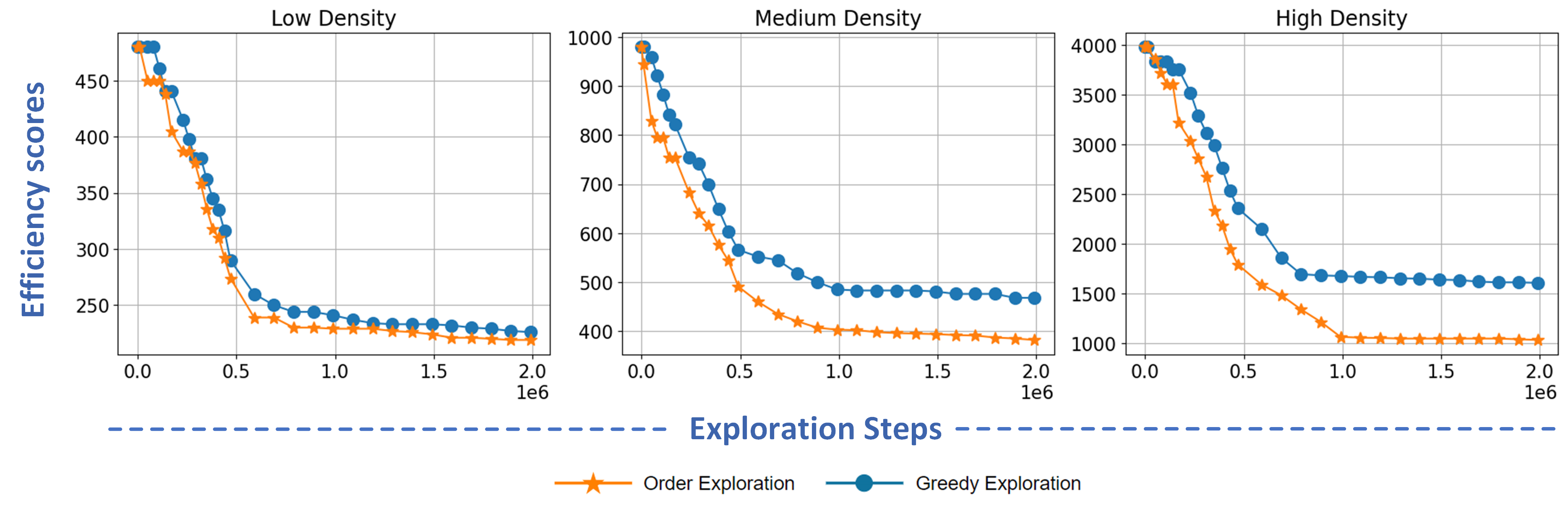}
    \caption{The figure highlights the performance of Order Exploration and Greedy Exploration in discovering embedding orders for training graphs in the real training sets under various graph types including low density, medium density, and high density. In subfigures, the x-axis denotes the number of exploration steps, while the y-axis represents the corresponding efficiency scores of the resulting orders. As a result, in these subfigures, each orange (blue) point indicates the best efficiency score of the embedding order resulted by Order Exploration (Greedy Exploration) after a specific number of exploration steps.}
    \label{fig:exploreR2}
\end{figure*}

\subsection{Results}
\subsubsection{The performance of the proposed exploration strategy} This section demonstrates the utility of Order Exploration in generating precomputed actions, commonly referred to as embedding orders, to streamline the training process of \NAMEB. Our experiments are designed to assess two key aspects: 1) the effectiveness of Theorem~\ref{thm:bfexsubmodular} with a general prefix that implies the practical significance of the lower bound outlined in Equation~\ref{eqn:appqubits}, and 2) the overall performance of our proposed exploration strategy in terms of the efficiency scores of the resulting embedding orders.

\noindent{\bf The Effectiveness of Theorem~\ref{thm:bfexsubmodular} with a General Prefix.} In this experiment, we designed a meticulous procedure to evaluate the effectiveness of Theorem~\ref{thm:bfexsubmodular} for general prefix orders, which are not necessarily non-expansion orders. Specifically, we compare the number of qubits gained when embedding the same node $v$ at two different embedding steps $t$ and $t+\delta$, denoted as $C(v|t)$ and $C(v|t+\delta)$. In Theorem~\ref{thm:bfexsubmodular}, we prove that $C(v|t) \leq C(v|t+\delta)$ if there is no expansion embedding between $t$ and $t+\delta$. Thus, our focus here lies on comparing $C(v|t)$ and $C(v|t+\delta)$ when there is at least one expansion embedding between these steps. Given a graph $P = (V_P, E_P) \in \mathcal{G}_{syn} \cup \mathcal{G}_{real}$, the procedure for selecting $v$, $t$, and $\delta$ unfolds as follows. First, we randomly select a subset $S_P \subset V_P$ by selecting a random prefix $O_A \in \mathcal{E}^{(S_P)}$. We select $v$ as a random node in $V_P \setminus S_P$. Next, we need to select $t$ and $\delta$ such that there exists an expansion embedding from $t$ to $t+\delta$. In detail, we start by iteratively embedding nodes following the prefix order $O_A$. Subsequently, we randomly embed nodes $v' \in V_P\setminus S_P,v' \neq v$ until we obtain an expansion embedding at step $|S_P|+\tau$. Finally, $t$ is assigned as $|S_P|+\tau - 1$ and $\delta$ is randomly selected such that $\delta \geq 1$. Performing this procedure repeatedly, we can obtain different $v$, $t$, and $\delta$ for each training graph. For each triplet $(v,t,\delta)$ of a given training graph, $C(v|t)$ and $C(v|t+\delta)$ is calculated similarly as discussed in Theorem~\ref{thm:bfexsubmodular}. For each setting of $(n,d)$, we measure the average of $C(v|t)$ and $C(v|t+\delta)$ on all triplet $(v,t,\delta)$ sampled from graphs with the setting $(n,d)$. We denote the average on $C(v|t)$ and $C(v|t+\delta)$ as \emph{early embedded gain} and \emph{late embedded gain}, respectively.

The results are illustrated in Table \ref{befor-after-expand-table}. Across all $(n,d)$ configurations, the average qubit requirement for embedding a node prior to topology expansion (early embedded gain) is consistently lower compared to post-topology expansion (late embedded gain). Interestingly, when the graph density increases, the difference in qubit gain between before and after topology expansion becomes more significant. This trend can be attributed to the expansion of the topology, which, on average, elongates the length of clean paths. Consequently, embedding a node after topology expansion tends to incur higher costs. These observations highlight the significance of the lower bound $\Bar{F}$ on the efficiency score of embedding orders with a general prefix, as derived from Equation~\ref{eqn:appqubits}.

\noindent{\bf The Efficiency in Exploring Orders.} In this experiment, we conducted a comparison between Order Exploration and Greedy Exploration to assess their effectiveness in exploring embedding orders for training logical graphs across various node sizes and degrees $(n, d)$. Figure~\ref{fig:exploreR1} presents subfigures corresponding to each setting $(n,d)$ of the logical graphs in the synthetic training set $\mathcal{G}_{syn}$. Each subfigure displays the efficiency scores of the embedding orders resulted from both strategies with variations in the exploration limit $K$ which is mentioned in Algorithm~\ref{alg:order exploration}. We recall that given a logical graph $P$ and a hardware graph $H$, the efficiency score of an embedding order $O$ is the number of qubits when embedding $P$ into $H$ following the order $O$.

The results depicted in the subfigures highlight the superior performance of Order Exploration over Greedy Exploration across all $(n,d)$ settings. In detail, the disparity becomes more significant with increasing values of $n$ and $d$. For example, with $(n,d) = (80,2)$, the average efficiency score of Order Exploration is approximately $2\%$ less than that of Greedy Exploration, while with $(n,d) = (140, 10)$, the gap increases to around $9\%$. Furthermore, we observe that the disparity also widens as the exploration limit $K$ increases from $0$ to $10^6$. 

The primary difference between the two methods lies in their recursion termination conditions. Specifically, Order Exploration utilizes a termination condition based on the lower bound on embedding non-expansion prefixes, as derived from Theorem~\ref{thm:bfexsubmodular}. In contrast, Greedy Exploration does not incorporate this lower bound in its termination condition. Consequently, Theorem~\ref{thm:bfexsubmodular} proves to be practically useful for guiding the exploration of embedding orders, particularly in scenarios involving large $(n,d)$ values and high exploration limits $K$.

Similarly, when applied to the real training set $\mathcal{G}_{real}$, the Order Exploration approach consistently outperforms Greedy Exploration in terms of efficiency scores (Figure \ref{fig:exploreR2}). Specifically, when dealing with low-density and medium-density graphs, the disparity between Order Exploration and Greedy Exploration is small. This is because the limited number of nodes and edges restricts the variation in qubit usage across different embedding orders. For example, in order to embed logical graphs with fewer nodes and edges, the number of required expansion steps, which consumes significantly more qubits than non-expansion steps, is small. Thus, the qubit usage between different embedding orders shows little variation. However, when dealing with high-density graphs, Order Exploration significantly outperforms Greedy Exploration. Specifically, Order Exploration requires approximately $1100$ qubits, while Greedy Exploration requires more than $1600$ qubits for its solution. This illustrates the superior performance of Order Exploration, particularly in settings involving high-density graphs.

\begin{figure*}[h]
\centering
\includegraphics[width=1\columnwidth]{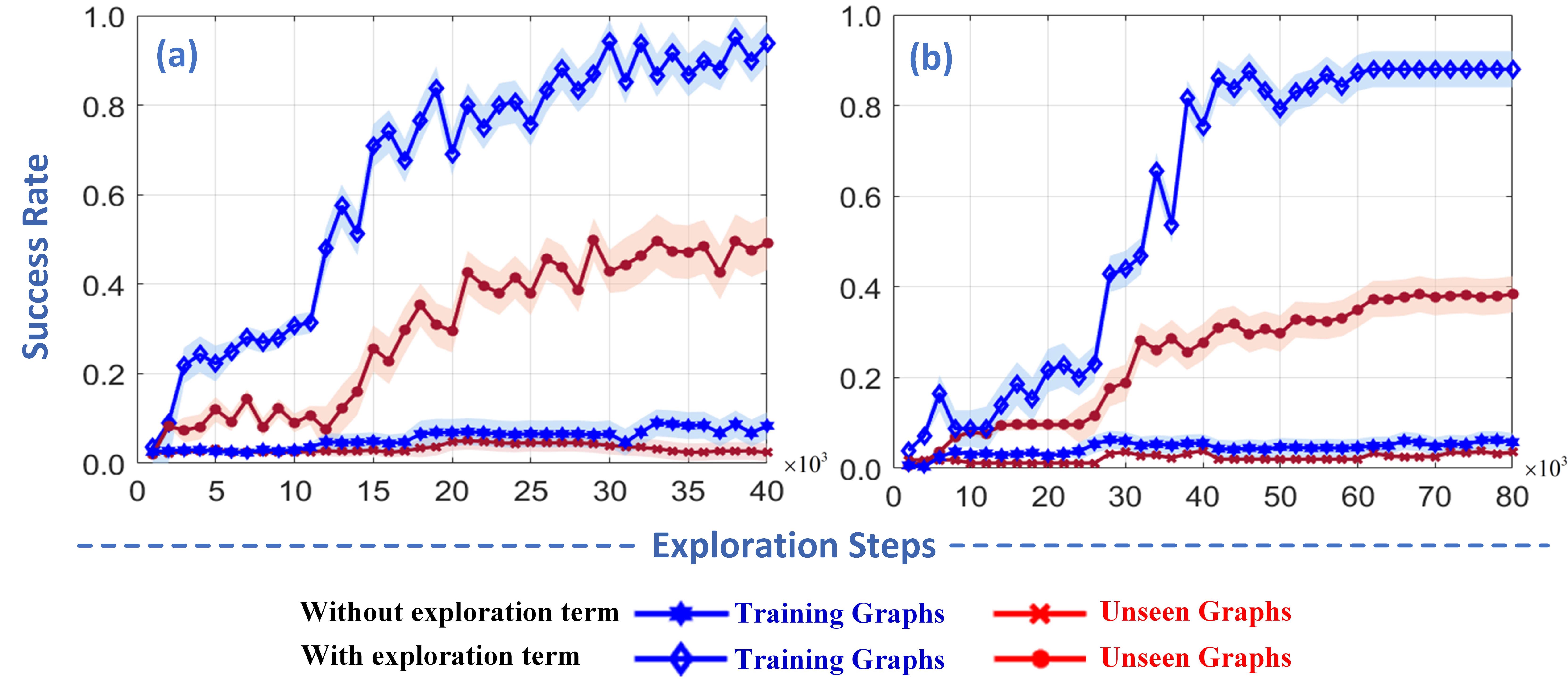}
\caption{The performance of a simple approach tested on (a) graphs with $30$ nodes and (b) graphs with $50$ nodes after $40,000$ and $80,000$ training steps, respectively. The $x$-axis represents the number of training steps, while the $y$-axis represents the success rate. The blue line indicates the training performance, while the red line indicates the test performance.}
	\label{fig:simple:learning performance}
\end{figure*}

\subsubsection{The limitation of \NAMEA}

Here, we evaluate the practical performance of \NAMEA\ to identify its limitation. In this experiment, we adopt a different setting. Specifically, we generate a dataset of $1000$ graphs and split them into training and test sets. Our goal is to train the model on $700$ BA graphs with $n = 70$ nodes and degree $d = 5$, aiming for effective generalization to the remaining $300$ graphs, which have varying sizes ($n \in \{30, 50\}$) and a lower degree ($d = 2$).

We employ the Actor-Critic method~\cite{Mnih2016} to update the policy, following Equations~\ref{eq:actor update} and \ref{eq:critic update}. Both the actor and critic are implemented as fully connected neural networks. During training, graphs from the training set are sequentially selected as initial states for exploration. The exploration history is then recorded and used to update the policy after all graphs in the training set have been processed.  

Figure~\ref{fig:simple:learning performance} highlights the limitations of \NAMEA\ in producing feasible solutions. Without the exploration term in Equation~\ref{eq:simple:reward}, \NAMEA\ struggles to generate feasible solutions during training, with a feasibility rate of less than $10\%$. Incorporating the exploration term improves feasibility by encouraging the model to mimic precomputed solutions, leading to a higher feasibility rate during training. However, this improvement does not generalize well to unseen graphs, where the feasibility rate drops to below $50\%$. Notably, \NAMEA\ encounters difficulties even with very small graphs. This suggests that its performance will deteriorate further as graphs become denser and larger. Thus, a naive RL approach is inadequate for addressing the minor embedding problem in quantum annealing, which fundamentally requires feasible solutions for every input graph.

To address these challenges, we propose \NAMEB, an enhanced RL framework designed to generalize effectively to unseen graphs while guaranteeing the feasibility of its solutions. In the following section, we demonstrate the effectiveness of \NAMEB\ through comprehensive evaluations.

\subsubsection{The Performance of \NAMEB}
In this experiment, we compare the performance of \NAMEB\ against three state-of-the-art methods on synthetic and real datasets.

\begin{figure}[t]
    \includegraphics[width=0.5\columnwidth]{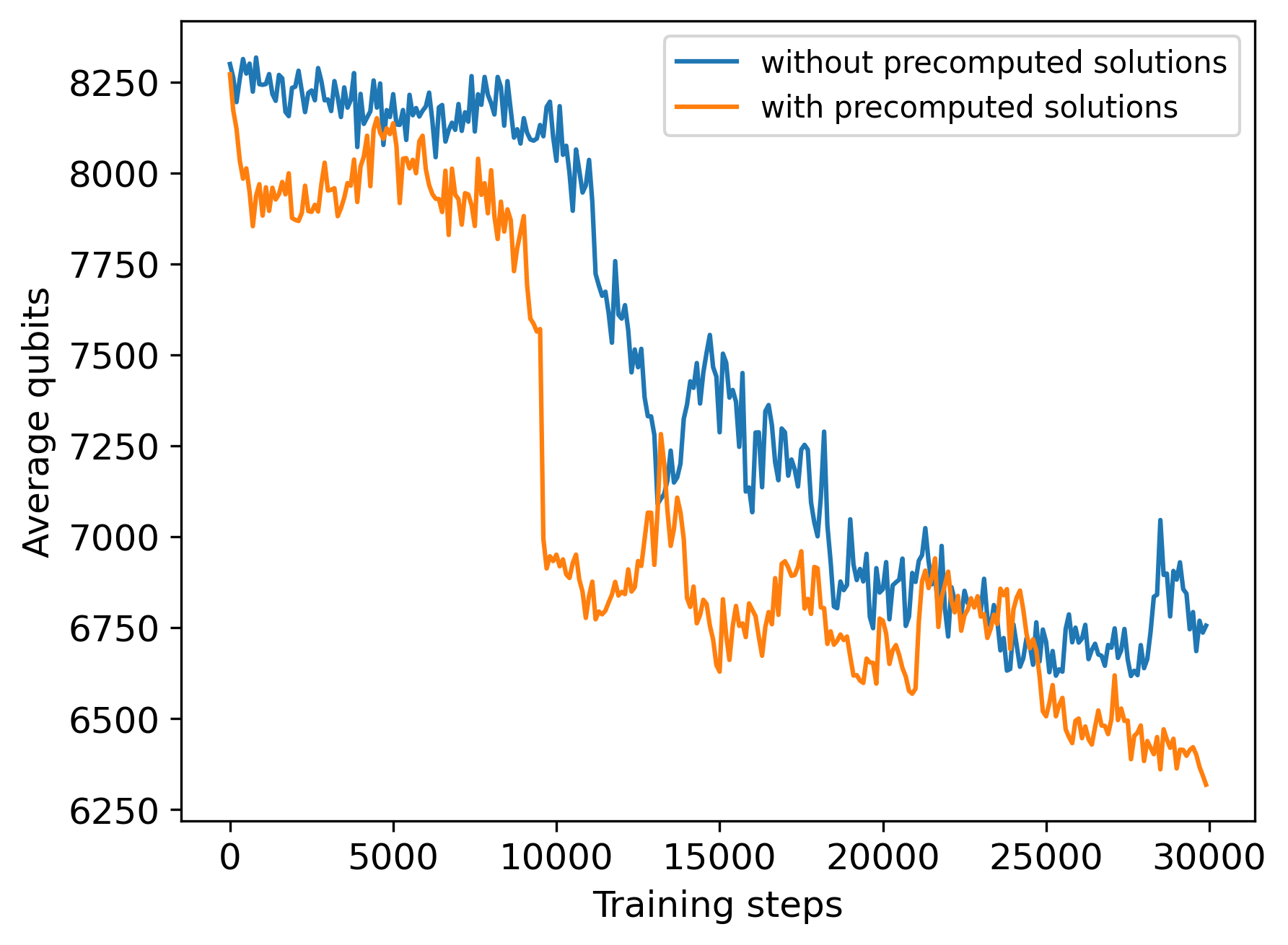}
    \caption{Training performance in term of average qubit usage of \NAMEB\ with and without precomputed solutions from Order Exploration.}
    \label{fig:training performance}
\end{figure}

\begin{table}[t]
\centering
\begin{adjustbox}{width=\columnwidth,center}
\begin{tabular}{@{}l|ccc|ccc|ccc|ccc@{}}
\toprule
n & \multicolumn{3}{c|}{80} & \multicolumn{3}{c|}{100} & \multicolumn{3}{c|}{120} & \multicolumn{3}{c}{140} \\ \midrule
degree & 2 & 5 & 10 & 2 & 5 & 10 & 2 & 5 & 10 & 2 & 5 & 10 \\ \midrule
OCT-based & 357.76 & \textbf{845.38} & \textbf{1222.11} & 515.74 & \textbf{1263.32} & \textbf{1850.9} & 725.84 & \textbf{1782.18} & \textbf{2607.06} & 931.24 & \textbf{2348.74} & \textbf{3488.18} \\
Minorminer & 338.56 & 1236.88 & 2110.16 & 501.56 & 1914.18 & 3313.24 & 664.76 & 2632.82 & 4854.10 & 830.56 & 3554.01 & 6573.84 \\
ATOM & 384.06 & 1126.14 & 1815.72 & 546.62 & 1782.86 & 2882.72 & 727.46 & 2560.36 & 4217.48 & 998.58 & 3448.24 & 5621.88 \\
ATOM (random) & 456.90 & 1411.14 & 2304.69 & 683.44 & 2146.00 & 3713.32 & 932.74 & 3089.38 & 5398.51 & 1234.72 & 4179.40 & 7476.58 \\
ATOM (degree) & 549.64 & 1482.06 & 2481.76 & 840.84 & 2242.20 & 3845.93 & 1096.64 & 3109.12 & 5530.90 & 1407.96 & 4103.83 & 7313.83 \\
\midrule
\NAMEB & \textbf{324.62} & 1027.36 & 1686.20 & \textbf{485.16} & 1580.08 & 2566.14 & \textbf{655.88} & 2263.66 & 3870.02 & \textbf{821.88} & 3082.59 & 5100.54 \\ \bottomrule
\end{tabular}
\end{adjustbox}
\caption{The average qubit usage of OCT-based, Minorminer, ATOM, and \NAMEB\  corresponding to 12 different graph sizes in the synthetic dataset.}
\label{table:synthetic qubits}
\end{table}

\begin{figure}[t]
    \includegraphics[width=0.5\columnwidth]{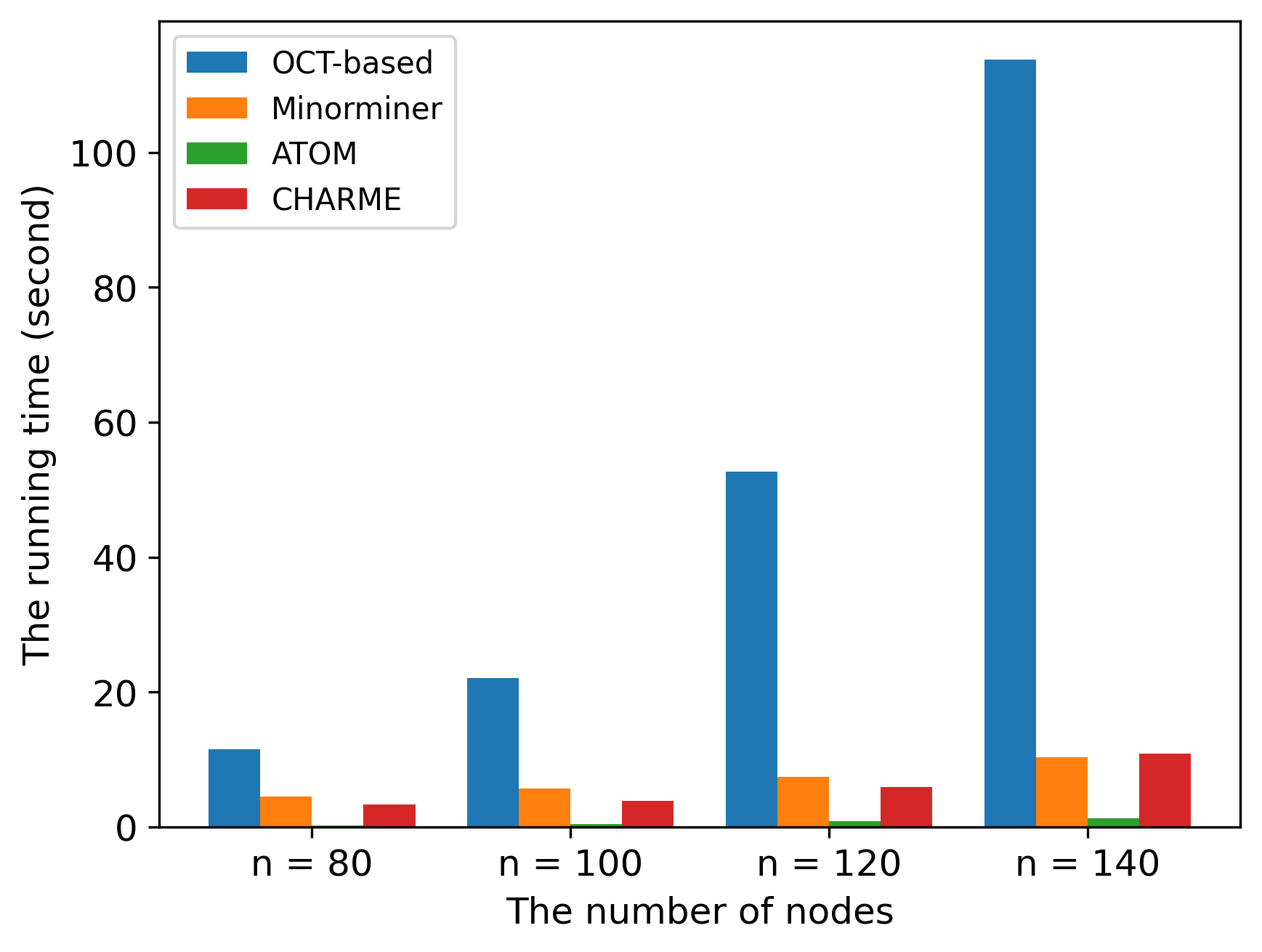}
    \caption{Comparison between four methods in terms of running time in the synthetic dataset.}
    \label{fig:synthetic running time}
\end{figure}

\begin{figure}[t]
    \includegraphics[width=0.5\columnwidth]{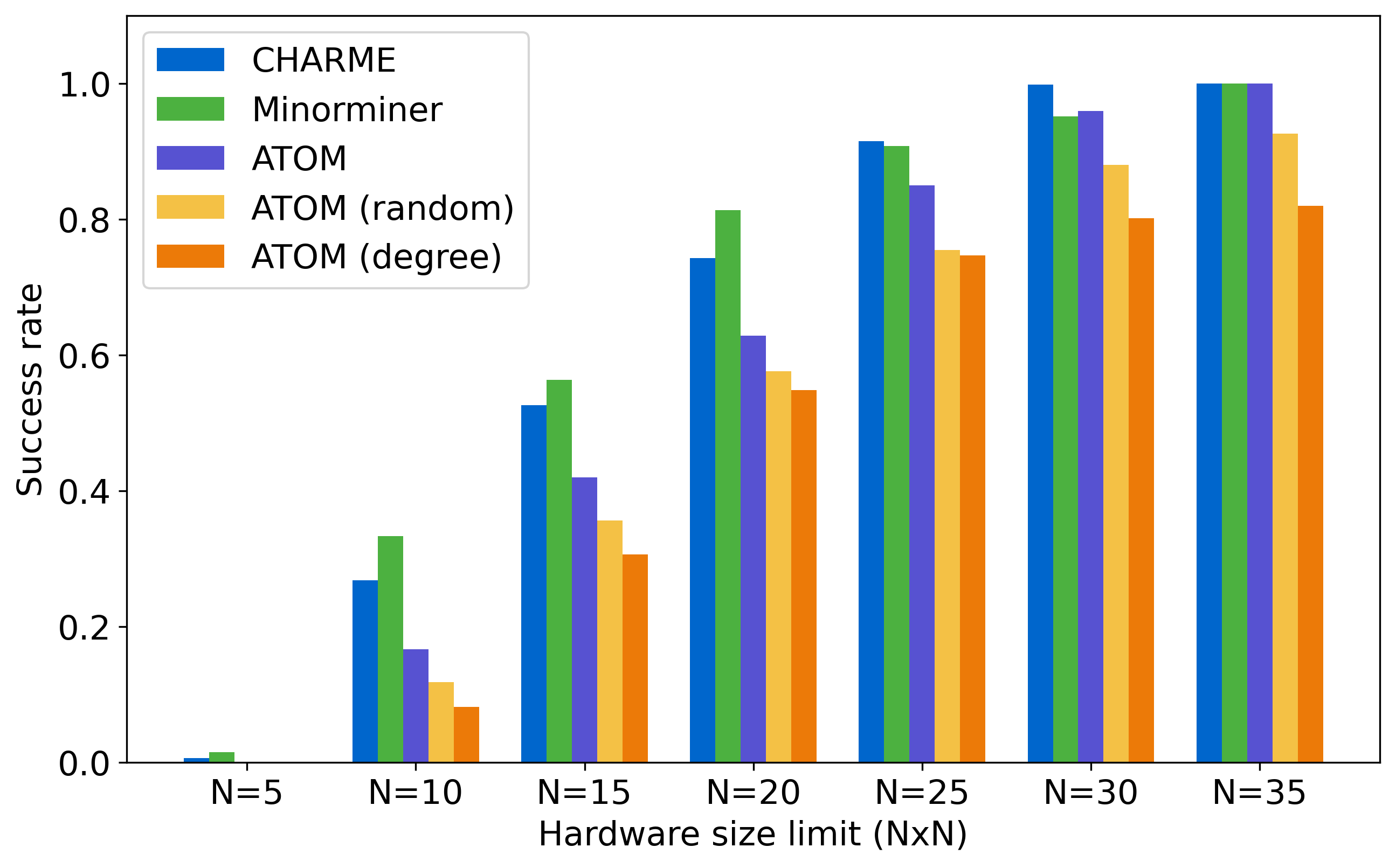}
    \caption{Comparison between four methods in terms of success rate with different hardware size limits. Hardware used in this experiment is in the form of $C(N,N,4)$ (an $N\times N$ grid of bipartite units $K_{4,4}$)}
    \label{fig:synthetic hardware limits}
\end{figure}

\noindent{\bf The Synthetic Dataset.}
Here, we assess the performance of \NAMEB\ on the synthetic dataset, focusing on the number of qubits used and the running time. The goals of this experiment are as follows:
\begin{itemize}
    \item We examine the training performance of \NAMEB\ with and without the precomputed solutions resulting from Order Exploration.
    \item We evaluate how effectively \NAMEB\ leverages the heuristic strategies to decide the order of the node embedding used in ATOM and its variants.
    \item We compare the quality of the solutions in terms of the number of qubits obtained by \NAMEB\ with three state-of-the-art methods.
    \item We measure the inference time of \NAMEB\ and compare it with the running time of other methods to assess its computational efficiency.
    \item We assess the success rate for finding embedding solutions in a short running time with different hardware size limits of \NAMEB\ and other methods.
\end{itemize}

Figure~\ref{fig:training performance} illustrates the training performance of \NAMEB\ in terms of average qubit usage in two scenarios: without and with precomputed solutions from Order Exploration. In the "without precomputed solutions" scenario, we remove the exploration term on the reward function. First, the results demonstrate that precomputed solutions accelerate \NAMEB's exploration for high-quality solutions. Specifically, without support from precomputed solutions, \NAMEB\ gets stuck on random solutions (i.e., average qubit usage $\approx 8200$) in the first $12000$ steps before eventually discovering good patterns (with average qubit usage dropping to $7200$). On the other hand, when using precomputed solutions, \NAMEB\ merges quickly to those solutions within the first $1000$ steps (i.e., average qubit usage $\approx 7800$). With a better starting point, \NAMEB\ explores high-quality patterns more rapidly, with the average qubit usage dropping to $6900$ by the $9000$\emph{th} step. In addition, we can see how efficient \NAMEB\ can explore in the solution space. In detail, after $30000$ steps of training, \NAMEB\ reduces the number of qubits required to embed graphs in the training set by approximately $25\%$.

 Table~\ref{table:synthetic qubits} presents the average number of qubits obtained from solutions generated by four methods for various pairs of size and degree $(n, d)$ of logical graphs in the test set. First, the results imply that \NAMEB\ leverages the heuristic strategies for the order of node embedding used in ATOM and its variants. Across all pairs $(n, d)$, \NAMEB\ consistently provides solutions with fewer qubits compared to ATOM. In particular, \NAMEB\ achieves reductions in qubit usage of up to $11.36\%$, $68.21\%$, and $72.78\%$ compared to ATOM, ATOM (random), and ATOM (degree), respectively. This observation indicates the advantage of using RL for determining the order of node embedding compared to heuristic and naive strategies. Additionally, the learning ability of the proposed RL framework is evident, as the policy learned from the training set performs well on new graphs of different sizes and degrees. 

Turning to solution quality, the results in Table~\ref{table:synthetic qubits} demonstrates that \NAMEB\ outperforms three state-of-the-art methods in terms of qubit usage in test cases with degree $d = 2$. On average, in these cases, \NAMEB\ achieves $2.52\%$ reduction in terms of qubit usage, compared to the best current methods. In other cases, although \NAMEB\ does not overcome the OCT-based approach in terms of qubit usage, it still outperforms Minorminer and ATOM. This observation, combined with the subsequent analysis of the running time, is important to illustrate the practical effectiveness of \NAMEB\ in handling the minor embedding problem.

Next, we examine the running time of OCT-based, Minorminer, ATOM, and \NAMEB. We note that the running time of ATOM, ATOM (random), and ATOM (degree) is identical, so we do not show the running time of ATOM (random) and ATOM (degree) in this experiment. The results in Figure~\ref{fig:synthetic running time} illustrate that the running time of OCT-based is significantly larger than the running time of three other methods. In particular, the running time of OCT-based is approximately 6 times bigger than the running time of the second slowest method, Minorminer. Thus, although OCT-based is efficient in terms of qubit usage in some certain cases, its expensive running time can pose a significant bottleneck in practical QA processes. On the other hand, the running time of \NAMEB\ is comparable to the fastest method, ATOM. Furthermore, as mentioned above, \NAMEB\ can offer better solutions than Minorminer and ATOM in all cases. Therefore, \NAMEB\ can be the optimal choice for tackling the minor embedding problem in practice, especially when current QA processes must handle user requests immediately due to increased demand for their use.

Finally, we assess the success rate in finding feasible solutions of fast embedding methods, including \NAMEB, Minorminer, ATOM, ATOM (random), and ATOM (degree) under different hardware size limits. The hardware used in this experiment is in the form of an $NxN$ grid of $K_{4,4}$ unit cells with $N \in \{5,10,15,20,25,30,35\}$. The running time limit in this experiment is set as $10$ seconds. The success rate is defined as the proportion of test cases (out of 600) in which a feasible solution is found within the running time limit. We note that Minorminer includes a post-processing function that aims to refine the final solution to fit the hardware size constraint, while \NAMEB, ATOM, ATOM (random), and ATOM (degree) do not include any function to refine solutions. The results in Figure~\ref{fig:synthetic hardware limits} illustrate that \NAMEB\ achieves a significantly higher success rate than ATOM, ATOM (random), and ATOM (degree) in all seven cases. In the case of $N=10$, the success rate of \NAMEB\ is twice that of ATOM and three times that of ATOM (random). This demonstrates the superiority of RL over heuristic and naive strategies in determining the order of node embedding by significantly boosting the success rate. Furthermore, \NAMEB's success rate is comparable to that of Minorminer (even achieves a higher success rate in two cases of $N=25$ and $N=30$). This observation indicates the efficiency and potential of \NAMEB\ in finding feasible solutions with limited hardware resources. Specifically, by determining a decent order of node embedding, \NAMEB\ achieves a success rate comparable to that of Minorminer without requiring refinement of the final solution.

\begin{figure*}[t]
\begin{subfigure}{.3\textwidth}
  \centering
  \includegraphics[width=1\textwidth]{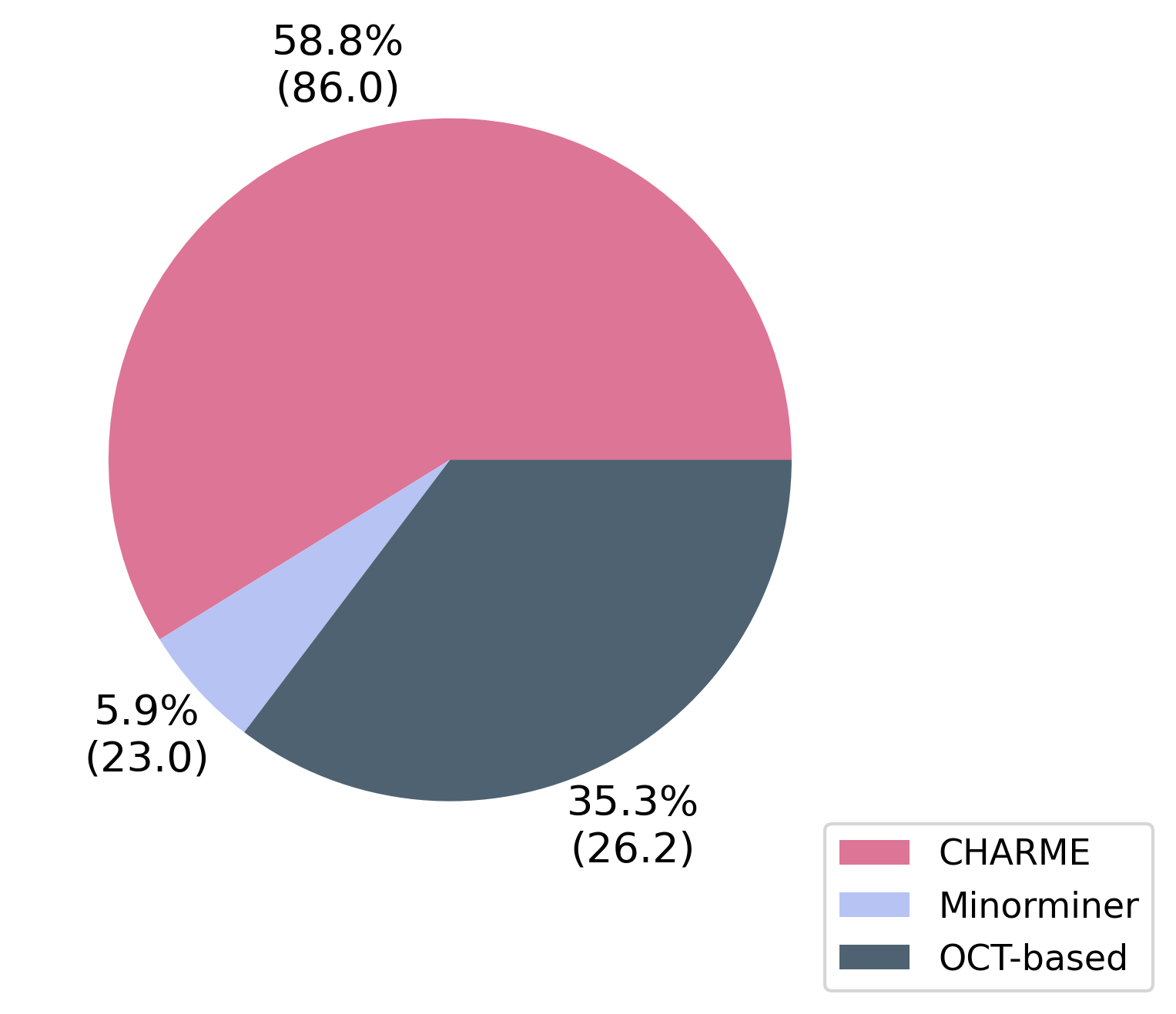}
  \caption{Low density}
  \label{fig:real world:low density}
\end{subfigure}
\hfill
\begin{subfigure}{0.3\textwidth}
  \centering
  \includegraphics[width=1\textwidth]{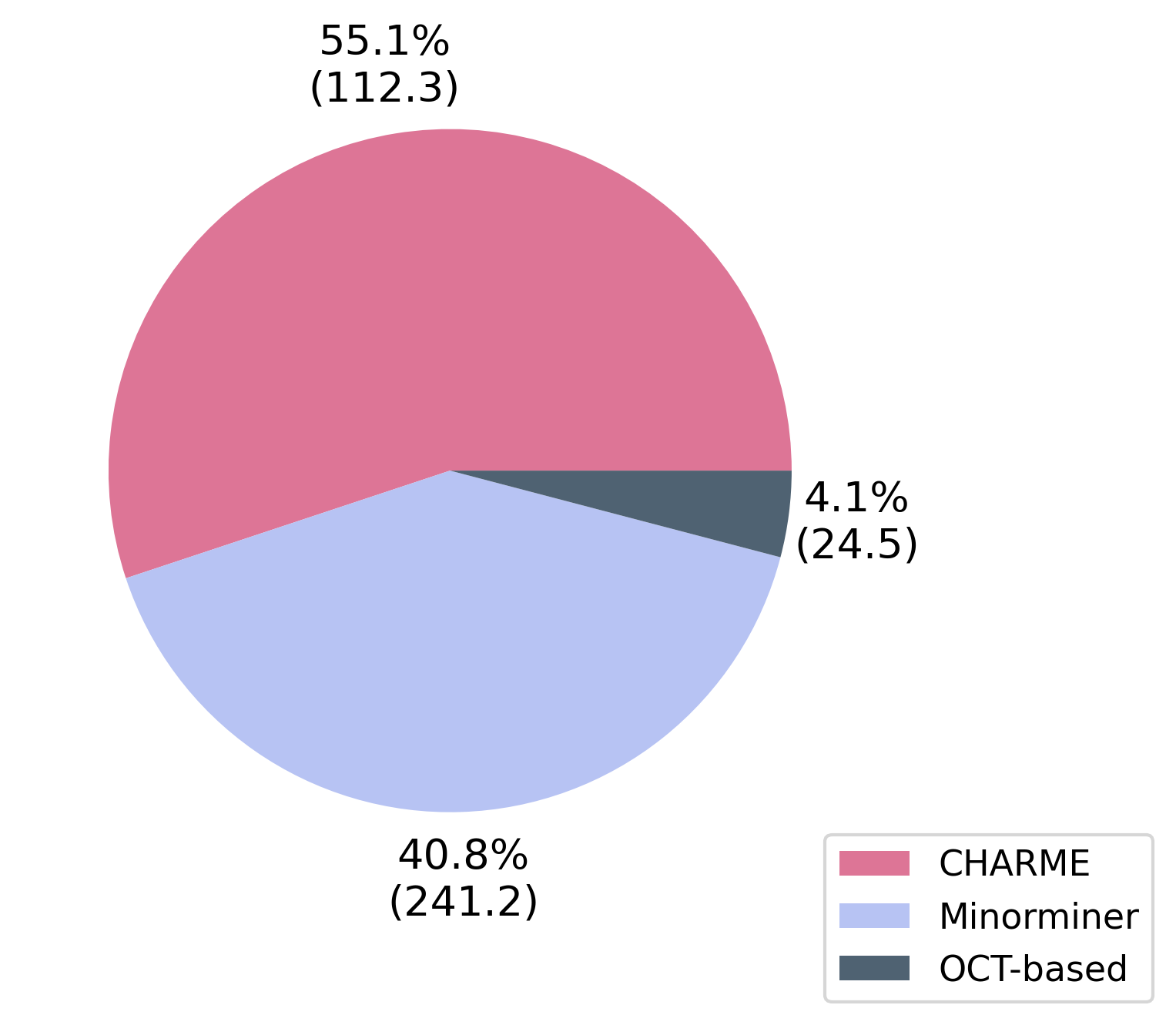}
  \caption{Medium density}
  \label{fig:real world:medium density}
\end{subfigure}
\hfill
\begin{subfigure}{.3\textwidth}
  \centering
  \includegraphics[width=1\textwidth]{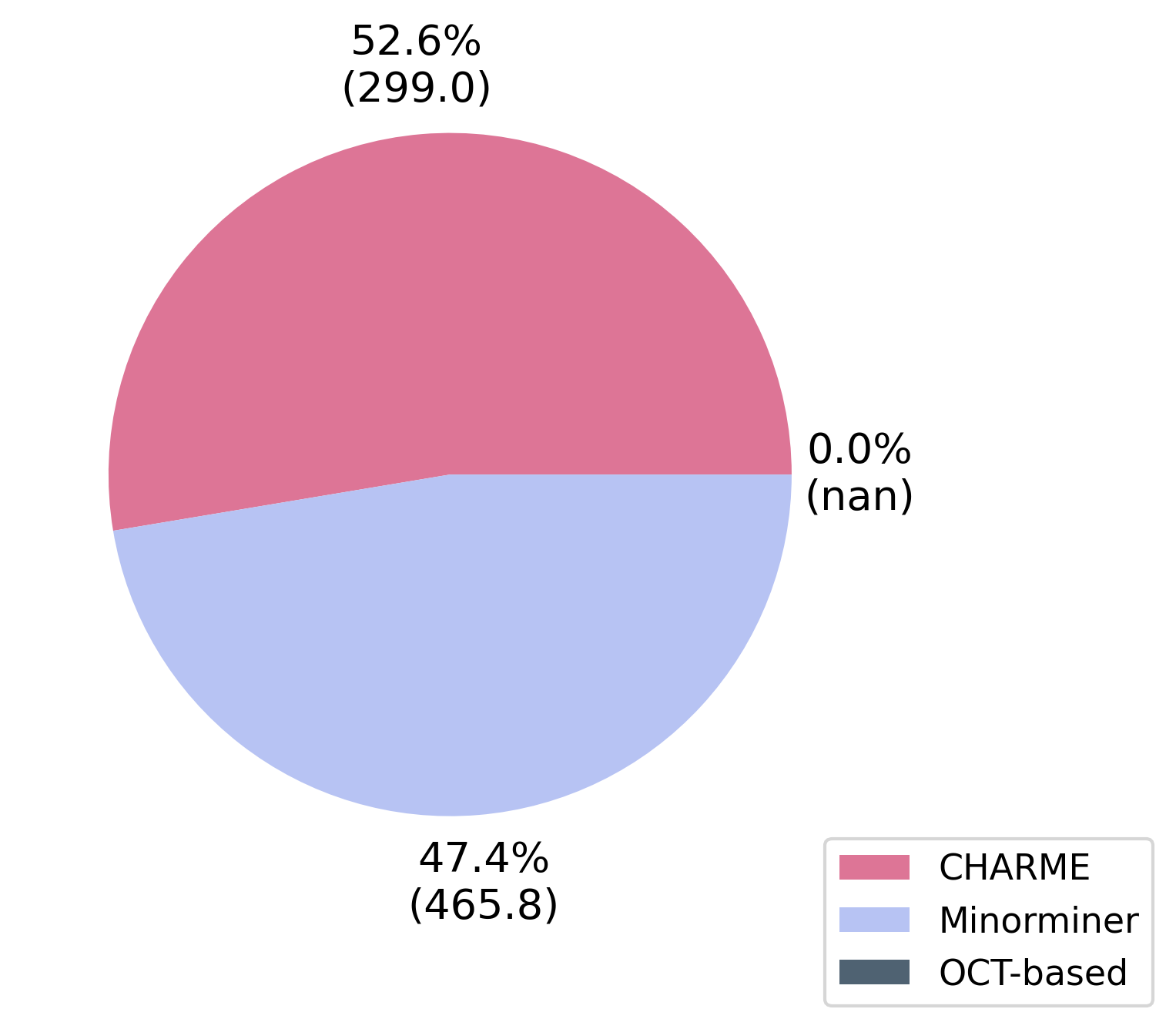}
  \caption{High density}
  \label{fig:real world:high density}
\end{subfigure}

\caption{The frequency of finding the best solutions and the corresponding average qubit usage (shown as numbers in parentheses)} of three methods applied to three groups of the real world dataset categorized as (a) low-density (b) medium-density (c) high-density.
\label{fig:real world:frequency}
\end{figure*}

\noindent{\bf The Real Dataset.} Here, we assess the performance of \NAMEB\ in comparison to OCT-based and Minorminer on a real dataset. To evaluate these methods, we measure the frequency at which each approach achieves the best solutions across three testing groups: low-density, medium-density, and high-density. Additionally, we analyze the running time of the three methods for various sizes of graphs. We note that in this experiment, ATOM, ATOM (random), and ATOM (degree) do not achieve the best solution in any case. Thus, in order to avoid redundancy, we do not include these methods in this experiment.

Figure~\ref{fig:real world:frequency} illustrates the frequency of finding the best solutions in terms of qubit usage of three methods for various testing groups including low-density, medium-density, and high-density. First, the \NAMEB\ method consistently outperforms competitors, achieving the highest frequency of finding solutions with the fewest qubits in the three testing groups. Remarkably, the frequency of \NAMEB\ is always above 50\%, illustrating its superiority. Another interesting observation is in the performance of the OCT-based method, which works well on the synthetic dataset, but struggles on the actual QUBO-based dataset. This phenomenon is due to the sparse nature of logical graphs formed by actual QUBO formulations, making bottom-up approaches like Minorminer and \NAMEB\ more efficient choices. In addition, the average qubit usage for the best solutions found by CHARME is higher than that found by Minorminer for low-density graphs. This suggests that CHARME performs better on low-density graphs with a large number of nodes, whereas Minorminer is more effective for graphs of a smaller size. However, this trend reverses for medium and high-density graphs, where CHARME excels on smaller graphs, and Minorminer proves to be more effective on larger ones.

\begin{figure}[t]
    \includegraphics[width=0.5\columnwidth]{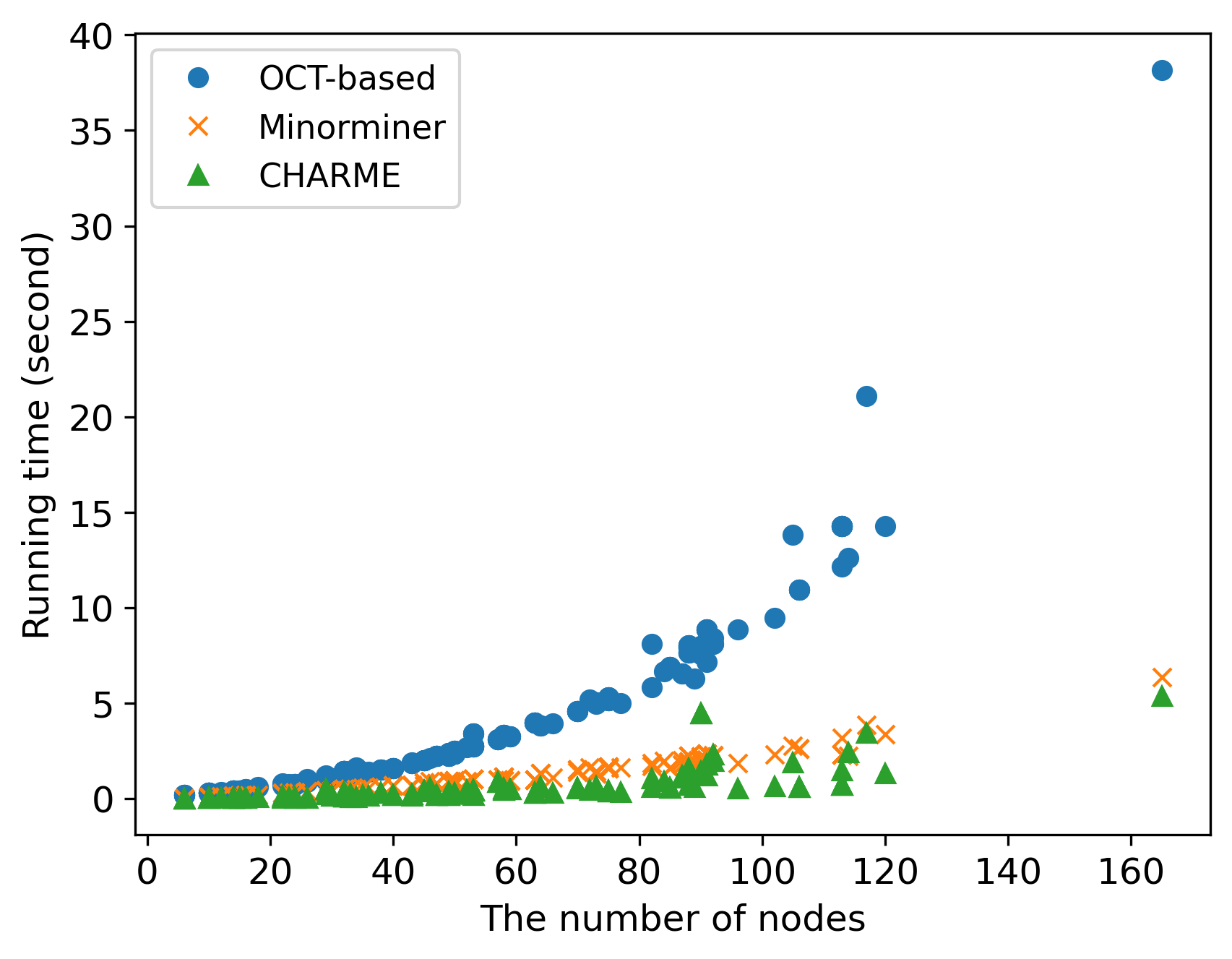}
    \caption{Comparison between three methods in terms of running time in the real world dataset with the number of nodes ranging from 6 to 165.}
    \label{fig:real world:running time}
\end{figure}

On the other hand, Figure~\ref{fig:real world:running time} presents the running time analysis of the three methods. We observe a similar trend in the running time for the real-world dataset compared to the synthetic dataset. Specifically, the running time for Minorminer and \NAMEB\ is comparable, while the OCT-based method exhibits significantly higher running times. That further confirms the efficiency of bottom-up approaches in handling the real-world dataset. In addition, the running time curve of \NAMEB\ is smoother than those of Minorminer and OCT-based, which exhibit fluctuations. This observation suggests that \NAMEB's performance is more stable compared to its competitors.

\section{Conclusion}
\label{sec:conclusion}
In conclusion, our work introduces \NAMEB, a novel approach using Reinforcement Learning (RL) to tackle the minor embedding problem in Quantum Annealing (QA). Through experiments on synthetic and real-world datasets, \NAMEB\  demonstrates superior performance compared to existing methods, including fast embedding techniques such as Minorminer and ATOM, as well as the OCT-based approach known for its high-quality solutions but slower runtime. Additionally, our proposed exploration strategy enhances the efficiency of \NAMEB's training process. These results highlight the potential of \NAMEB\ to address the scalability challenges in QA, offering promising avenues for future research and application in quantum optimization algorithms.

\bibliographystyle{ACM-Reference-Format}
\bibliography{bibliography}
%


\end{document}